\documentclass{article}

\usepackage[main, final]{neurips_2026}

\usepackage[utf8]{inputenc} 
\usepackage[T1]{fontenc}    
\usepackage{hyperref}       
\usepackage{url}            
\usepackage{booktabs}       
\usepackage{amsfonts}       
\usepackage{nicefrac}       
\usepackage{microtype}      
\usepackage{xcolor}         
\usepackage{graphicx}
\usepackage{subcaption}
\usepackage{booktabs}
\usepackage{array}
\usepackage{amsmath}
\usepackage{amssymb}
\usepackage{mathtools}
\usepackage{amsthm}
\usepackage{subfiles}
\usepackage{tabularx}
\usepackage{multirow}
\usepackage{bbm}
\usepackage{wrapfig}
\usepackage[textsize=tiny]{todonotes}
\usepackage[capitalize,noabbrev]{cleveref}
\usepackage{thmtools}
\usepackage{thm-restate}

\theoremstyle{plain}
\newtheorem{theorem}{Theorem}[section]
\newtheorem{proposition}[theorem]{Proposition}
\newtheorem{lemma}[theorem]{Lemma}

\theoremstyle{definition}
\newtheorem{definition}[theorem]{Definition}

\newtheorem{problem}[theorem]{Problem}

\theoremstyle{remark}
\newtheorem{remark}[theorem]{Remark}
\newcommand{\model}{SAGE}
\newif\ifshowcomment
\showcommenttrue

\newcommand{\meanstd}[2]{#1{\tiny$\pm$#2}}

\title{\model: Mitigating Long-Horizon Reasoning Biases via Topological Guidance}

\author{Xinyue Zeng \\CS Department\\Virginia Tech\\
\And
Jiawei Zhang \\
  CS Department\\
  University of Wisconsin Madison\\
  \And
  Yujun Yan \\
  CS Department\\
  Dartmouth College\\
  \And
  Dawei Zhou \\
  CS Department\\
  Virginia Tech\\
}

\begin{document}

\maketitle

\begin{abstract}
Long-horizon reasoning remains a central challenge for large language models (LLMs) under sparse-reward regimes.
We argue that this brittleness arises from two biases induced by complex reasoning spaces: 
an exploration bias, where models are drawn toward locally plausible but structurally unstable branches, and a compounding bias, where small local deviations accumulate across depth and suppress rare rewards.
We introduce Symbolic Closure Analysis (SCA) as a theoretical lens characterizing how branching structures and sparse rewards induce these biases in long-horizon reasoning with local admissibility, and as a design principle for structural priors in less formal reasoning tasks. Motivated by this analysis, we propose \model{} (\textbf{S}tructural \textbf{A}dmissibility-\textbf{G}uided \textbf{E}xploration), a unified framework that injects structural guidance to alleviate exploration bias and compounding bias in long-horizon reasoning. \model{} combines two complementary structural guidance: algebraic sparsification, which projects locally admissible candidates onto operator-indexed algebraic subspaces to suppress spurious branching and mitigate exploration bias, and hyperbolic structural guidance, which embeds reasoning states into a negatively curved space to provide dense depth-wise signals and mitigate compounding bias. Across 12 benchmarks and 7 model families, \model{} outperforms competitive baselines. In particular, \model{} achieves up to an 8-fold improvement on the Andrews-Curtis problem, an open real-world long-horizon task. Code is available at: \url{https://github.com/Susan571/SAGE-NeurIPS2026}. 
\end{abstract}

\section{Introduction}
\label{sec:intro}
Long-horizon reasoning remains a central challenge for large language models (LLMs), where success often requires discovering deep, structured reasoning trajectories across many steps.
Post-training learning has become a dominant paradigm for improving LLM reasoning, with strong progress in mathematics and coding through outcome-based objectives~\citep{lyu2025exploring, zhang2025deep}.
However, as reasoning horizons grow, outcome-based post-training becomes brittle in sparse-reward regimes: limited intermediate rewards provide insufficient guidance for preserving long-term feasibility, leading to locally plausible but globally unstable patterns~\citep{suo2025long}.

This brittleness reflects two distinct but coupled long-horizon reasoning biases, illustrated in Figure~\ref{fig:long_horizon_biases} with the Andrews-Curtis (AC) trivialization task as a running example, where the task is to reach a trivial presentation through a long sequence of admissible transformations. The first is \emph{exploration bias}, which arises from the structural complexity of the reasoning space: as the reasoning tree expands with depth, successful trajectories occupy a narrow feasible region, while much larger regions contain locally plausible but globally unproductive branches~\citep{dziri2023faith}, so outcome-based post-training tends to favor trajectories that are easy to sample rather than trajectories that remain extendable to success. The second is \emph{compounding bias}, which arises from sparse-reward regimes with limited intermediate rewards: small local deviations accumulate across depth and progressively move the trajectory away from long-term feasibility~\citep{casper2023open, weaver2013optimal}. Together, these biases explain why LLMs may produce plausible intermediate steps while still failing at end-to-end long-horizon reasoning, and existing mitigation efforts address them along two corresponding axes. To counter exploration bias, one line of work introduces dense supervision that decomposes long-horizon reasoning into locally verifiable steps, ranging from interactive verifiers that enforce rigorous logical transitions~\citep{yang2023lean, hsiang2025leandojo} to learned verifiers or LLM-as-a-Judge that guide Monte Carlo Tree Search via step-wise critiques~\citep{lightman2023lets}; however, reliable process supervision remains structurally expensive to scale, ultimately shifting the bottleneck without resolving the fundamental difficulty of learning from sparse rewards. To counter compounding bias, a second line of work adapts outcome-based RL to sparse-reward regimes by augmenting the objective with auxiliary heuristics, including iterative bootstrapping methods like STaR or ReST~\citep{zelikman2022star, gulcehre2023reinf} and intrinsic motivation mechanisms based on entropy regularization or syntactic constraints~\citep{she2025rprm, zhang2025count, yue2025does, gai2025differential}; yet these methods do not model the structure of the long-horizon reasoning space, leaving the geometry of feasible trajectories implicit.

This gap motivates two fundamental research questions:
\textbf{Q1:} \textit{Can we characterize the dominant factors that drive long-horizon reasoning failures under structural complexity and sparse-reward regimes?}
\textbf{Q2:} \textit{Can this characterization be instantiated as a unified framework that injects structural guidance to alleviate exploration bias and compounding bias?}

\begin{wrapfigure}{r}{0.4\linewidth}
    \vspace{-18pt}
    \centering        \includegraphics[width=\linewidth]{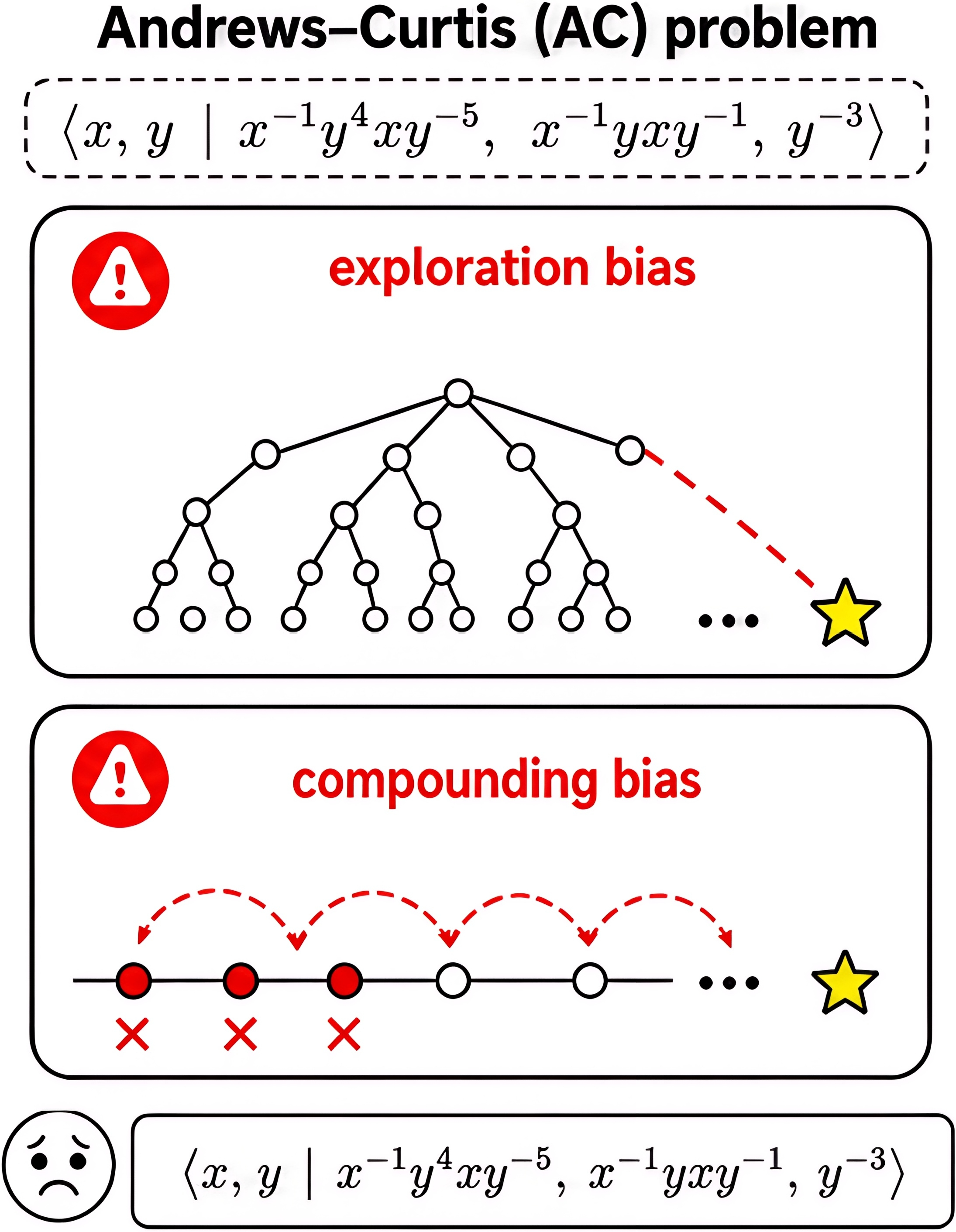}
    \caption{AC problem illustrates two long-horizon reasoning biases: exploration bias from many locally valid but low-promise branches, and compounding bias from early plausible deviations whose failures appear in later steps.}   \label{fig:long_horizon_biases}
    \vspace{-15pt}
\end{wrapfigure}


To address this gap, we first introduce \emph{Symbolic Closure Analysis} (SCA), a theoretical framework for characterizing the dominant factors behind long-horizon reasoning failures under structural complexity and sparse-reward regimes. SCA models reasoning as a sequence of locally admissible transformations and studies how feasible support evolves in expanding reasoning spaces. Our analysis shows that structural complexity dilutes feasible support across high-volume but unproductive branches, producing exploration bias, while sparse-reward regimes provide limited intermediate rewards, allowing local deviations to accumulate with depth and produce compounding bias.

Motivated by SCA, we propose \textit{\textbf{S}tructural \textbf{A}dmissibility-\textbf{G}uided \textbf{E}xploration} (\model), a unified framework for alleviating long-horizon reasoning biases. \model{} injects structural guidance during policy optimization so that the learned policy internalizes useful properties of structured reasoning space, through two complementary structural guidance: \textit{algebraic sparsification}, which reduces spurious branching and alleviates exploration bias, and \textit{hyperbolic structural guidance}, which provides dense depth-aware signals and alleviates compounding bias. Together, these components translate the SCA diagnosis into trainable guidance signals.

We evaluate \model{} across 7 model families and 12 benchmarks spanning closed-form mathematical reasoning, free-form natural reasoning, and open long-horizon reasoning task. \model{} always outperforms competitive baselines and particularly achieves up to 8-fold improvement on AC problem, an open real-world long-horizon reasoning task. 

Our contributions are threefold:
\begin{itemize}
    \vspace{-0.5em}
    \item We introduce SCA as a theoretical lens to characterize the dominant factors behind long-horizon reasoning failures under structural complexity and sparse-reward regimes.
    \vspace{-0.37em}
    \item We propose \model, a unified framework for alleviating long-horizon reasoning biases.
    \vspace{-0.37em}
    \item Evaluation across 13 benchmarks
and 8 models show that \model{} consistently outperforms competitive baseline. Code is open-sourced at: \url{https://anonymous.4open.science/r/SAGE-Long-Horizon-Reasoning-AD70}.
\end{itemize}

\section{Preliminaries}
\label{sec:pre}
\subsection{Long-Horizon Reasoning}
\label{sec:lhr_prelim}
Following prior work~\citep{yao2023tree, lightman2023lets}, we study long-horizon reasoning as a sequential decision process over discrete symbolic manipulations. Let $\mathfrak S$ denote the task-specific symbolic interface with local admissibility predicate $\mathrm{Adm}_{\mathfrak S}$. At each step $t \in \{0,\ldots,T-1\}$, the system selects $a_t \in \mathcal A(s_t)$ with $\mathrm{Adm}_{\mathfrak S}(s_t,a_t)=1$ and transitions via $s_{t+1} = \Phi(s_t,a_t)$, yielding a trajectory $\tau=(s_0,a_0,\ldots,s_T)$ of maximum path length $T$. For analytical convenience, we cast this as a finite-horizon MDP $\mathcal M=\langle \mathcal S,\mathcal A,\mathcal P,\mathcal R,T\rangle$ over reasoning contexts, symbolic manipulations, transition dynamics, and task-level outcome signal.
The difficulty grows rapidly with $T$, shaped by two salient properties. The first is \emph{structural complexity}: the reachable search space scales as $|\Omega| = O(\bar{A}^T)$ for average branching factor $\bar{A}$, while successful trajectories form only a small subset $\mathcal{T}^*$ (treated as an analytical object, not as supervision), occupying a vanishing fraction of the reachable manifold as $T$ grows. The second is \emph{sparse outcome feedback}: meaningful supervision is often available only at the trajectory end. Under terminal reward $r(\tau) = \mathbf{1}[\tau \in \mathcal{T}^*]$, informative feedback is inherently rare because successful trajectories themselves are rare, making credit assignment progressively harder and often yielding ineffective optimization in sparse-reward regimes~\citep{uesato2022solving, weaver2013optimal}.

\subsection{Biases in Long-Horizon Reasoning}
\label{sec:optimization_biases}


Structural complexity and sparse outcome feedback do not merely make long-horizon reasoning more difficult; they induce recurring distortions in how trajectories are explored and preserved~\citep{zhou2025dissecting,brantley2025accelerating,liu2025evaluating,jahin2025evaluating}. Figure~\ref{fig:long_horizon_biases} illustrates these distortions in the AC problem, where an LLM policy transforms a group presentation toward the trivial presentation through legal symbolic moves such as inverting a relator, multiplying relators, or conjugating by a generator. Although many moves are locally valid, only a small subset continues to simplify the presentation over long horizons, giving rise to two coupled biases: \emph{structural complexity} mainly induces an \emph{exploration bias}, while \emph{sparse outcome feedback} mainly induces a \emph{compounding bias}, and the two interact as depth increases.


\textbf{Exploration bias.} As search volume grows exponentially, locally admissible trajectories that do not extend to success can overwhelmingly dominate the reachable set~\citep{dziri2023faith}: many moves are locally admissible, but only a few are structurally extendable. In the AC example, many legal moves branch from the same presentation, yet only a few continue to simplify the residual algebraic structure, so a policy sampling broadly from admissible moves spends most of its budget on locally plausible but structurally unstable paths.

\textbf{Compounding bias.} Sparse outcome feedback yields a different but equally persistent failure mode: with supervision only at the end of a long chain, small local deviations cannot be corrected early and instead accumulate. As shown in Figure~\ref{fig:long_horizon_biases}, an early AC move such as replacing one relator by its product with another may look locally plausible, yet redirect the trajectory into a region where subsequent legal moves preserve or amplify complexity, with failure surfacing only several steps later. This is especially problematic in KL-regularized post-training: when terminal rewards are weak, the update is dominated by the KL term~\citep{lyu2025exploring,schulman2017proximal}, and in the limit where the expected reward contribution vanishes, it degenerates toward preserving the reference policy,
$
\nabla J \approx -\beta \nabla D_{\mathrm{KL}}(\pi_\theta \| \pi_{\mathrm{ref}}),
$
allowing locally plausible deviations to persist rather than being corrected by task-level structure~\citep{casper2023open}.

Together, we formalize the problem as follows:

\begin{problem}[Alleviating Long-Horizon Reasoning Biases]
\label{prob:main}
\mbox{}\

\noindent
\textbf{Given:} (i) a task space $\mathcal{Q}$ with complex reasoning structures whose search volume scales exponentially with path length $T$ ($|\Omega| = O(\bar{A}^T)$); and (ii) a sparse-reward regime with a pretrained policy $\pi_0$ and sparse terminal reward $\mathcal{O}(\tau)$, where the initial success probability is negligible ($\mathbb{E}_{\tau \sim \pi_0}[\mathcal{O}(\tau)] \approx 0$).
\mbox{}\

\textbf{Find:} a reasoning policy $\pi^*$ that increases the probability of successful trajectories by alleviating both biases, without access to dense process labels or ground-truth solution paths:
$
\pi^* = \operatorname*{argmax}_{\pi} \mathbb{E}_{q \sim \mathcal{Q}} \left[ \mathbb{P}(\tau \in \mathcal{T}^* \mid \pi, q) \right].
$
\end{problem}

\section{Theoretical Analysis}
\label{sec:method}

We introduce \textit{Symbolic Closure Analysis} (SCA) as a theoretical lens for understanding the two biases identified in \Cref{sec:optimization_biases}. SCA characterizes the reasoning manifold through a prefix-closed feasible region induced by local admissibility, providing a principled lens for diagnosing why standard outcome-based learning is structurally biased under sparse-reward regimes.

\subsection{SCA: Symbolic Closure Analysis}
\label{sec:sca_closure}

To analyze long-horizon reasoning in sparse-reward regimes, we first revisit reference-regularized post-training~\citep{ziegler2019fine,ouyang2022training}, which stabilizes learning through a likelihood-based closure $\Omega_{\text{stat}} = { \tau : \mathbb{P}{\pi{\text{ref}}}(\tau) \ge \epsilon }$ but preserves trajectories that are easy to sample rather than those feasible over long horizons. We propose SCA, which instead defines feasibility through local admissibility. The resulting feasible region $\mathcal{F}$ is prefix-closed by construction, providing an analytical object against which exploration and compounding biases can be characterized.



\begin{definition}[SCA Feasible Closure]
\label{def:sca_feasible_closure}
Let $\mathfrak S$ denote a domain-specific symbolic system specifying admissible operators and a local admissibility predicate $\mathrm{Adm}_{\mathfrak S}$. For a trajectory $\tau=(s_0,a_0,\ldots,a_{T-1},s_T)$ with $a_t\in\mathcal A(s_t)$ and $s_{t+1}=\Phi(s_t,a_t)$, the SCA-feasible set $\mathcal F$ is
$
\tau\in\mathcal F
\Longleftrightarrow
\{ \forall t\in\{0,\ldots,T-1\},\ \mathrm{Adm}_{\mathfrak S}(s_t,a_t,s_{t+1})=1 \}.
$
\end{definition}

\begin{remark}[Local Admissibility]
$\mathrm{Adm}_{\mathfrak S}$ checks only whether a transition is locally well-formed under $\mathfrak S$; it does not certify global correctness, provide ground-truth steps, reveal successful trajectories, or use outcome labels.
\end{remark}

\begin{remark}[Prefix Closure]
$\mathrm{Adm}_{\mathcal{S}}$ induces a prefix-closed feasible region: if any prefix of $\tau$ violates local admissibility, all its continuations are infeasible. Equivalently, for any $\tau\in\mathcal{F}$, every prefix $\tau_{\le t}\in\mathcal{F}$.
\end{remark}

SCA admits three levels of instantiation. In explicit symbolic systems, $\mathrm{Adm}_{\mathfrak S}$ is given by the task interface and $\mathcal F$ is exact. In semi-structured domains such as closed-form mathematics, unresolved variables, equations, and answer-schema constraints serve as computable proxies for residual structure. In free-form natural reasoning, residuals and target anchors are estimated from prompt-conditioned semantic structure. 

\subsection{Theoretical Analysis of Biases under SCA}
\label{sec:theory}

We now show how SCA explains the two biases of \Cref{sec:optimization_biases}. Let $\mathcal{F}$ be the prefix-closed feasible set (Definition~\ref{def:sca_feasible_closure}) and $\mathcal{G}=\Omega\setminus\mathcal{F}$ its inadmissible complement, with $\mathcal{T}^*\subseteq\mathcal{F}$ unobserved. We treat $\mathcal{F}$ as a tractable structural proxy: induced by local admissibility in symbolic domains, and approximating the long-horizon extendable subset elsewhere.

For a rollout policy $\pi$, let $\mu_\pi := \mathbb{P}_{\tau\sim\pi}(\tau\in\mathcal{G})$ and $p_\pi := \mathbb{P}_{\tau\sim\pi}(\tau\in\mathcal{T}^*)$. Under SCA, exploration bias appears as $\mu_{\pi_0}\approx 1$—rollouts concentrate outside $\mathcal{F}$—while compounding bias appears as the persistence of early inadmissible deviations: terminal-only rewards let locally unstable prefixes survive long enough to dominate the trajectory distribution.

\textbf{Characterizing Exploration Bias via Feasible-Region Geometry.}
Structural complexity induces exploration bias because the reachable space grows exponentially with $T$, while admissible and successful trajectories occupy only a small fraction. Let $\tau_{\le t}$ denote a prefix and define the local feasibility indicator $\phi(\tau_{\le t})=\mathbf{1}[\tau_{\le t}\ \text{is locally admissible under }\mathrm{Adm}_{\mathcal S}]$. By prefix closure, $\phi(\tau_{\le t})=0 \Rightarrow \phi(\tau_{\le t'})=0$ for all $t'\ge t$, hence $\mathcal{G} = \{\tau\in\Omega:\exists t,\ \phi(\tau_{\le t})=0\}$.

Let $B_t$ and $B_t^{\mathcal{F}}$ denote the effective and locally admissible branching factors at depth $t$. When $B_t^{\mathcal{F}}\ll B_t$ for many $t$, a crude volume comparison gives
\begin{equation}
\frac{|\{\tau_{\le T}:\tau\in\mathcal{F}\}|}
{|\{\tau_{\le T}:\tau\in\Omega\}|}
\;\lesssim\;
\prod_{t=1}^{T}
\frac{B_t^{\mathcal{F}}}{B_t}.
\label{eq:volume_ratio}
\end{equation}
Thus, unless the base policy places exponentially increasing preference on $\mathcal{F}$, unconstrained rollouts concentrate outside the feasible region ($\mu_{\pi_0}\approx 1$). SCA identifies this volume mismatch as the structural origin of exploration bias and isolates feasible-support concentration as the property any successful mitigation must achieve and, as the next proposition shows, the structural quantity controlling gradient variance.

\begin{proposition}[Variance Decomposition over Feasible and Inadmissible Regions]
\label{prop:variance_decomposition}
Let $\pi$ be any rollout policy and $\hat{\rho}_\pi$ its empirical rollout distribution. For any score-function gradient term $g(\tau)$,
{\small
\begin{flalign}
\mathrm{Var}_{\tau\sim \hat{\rho}_{\pi}}\!\big[g(\tau)\big]
=&\ 
\mathbb{P}(\tau\in\mathcal{F})\,
\mathrm{Var}\!\big[g(\tau)\mid \tau\in\mathcal{F}\big]
\notag\\
&+
\mathbb{P}(\tau\in\mathcal{G})\,
\mathrm{Var}\!\big[g(\tau)\mid \tau\in\mathcal{G}\big]
\notag\\
&+
\mathbb{P}(\tau\in\mathcal{F})\mathbb{P}(\tau\in\mathcal{G})
\Big(
\mathbb{E}\!\big[g(\tau)\mid \tau\in\mathcal{F}\big]
-
\mathbb{E}\!\big[g(\tau)\mid \tau\in\mathcal{G}\big]
\Big)^2 .
\end{flalign}
}
Consequently, any policy with $\mathrm{supp}(\pi)\subseteq\mathcal{F}$ removes both the $\mathcal{G}$-conditioned variance and the between-region term.
\end{proposition}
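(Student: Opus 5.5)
This is the law of total variance applied to the two-cell partition $\{\mathcal{F},\mathcal{G}\}$ of $\Omega$. By Definition~\ref{def:sca_feasible_closure}, $\mathcal{G}=\Omega\setminus\mathcal{F}$, so the events $\{\tau\in\mathcal{F}\}$ and $\{\tau\in\mathcal{G}\}$ partition the sample space of $\hat{\rho}_\pi$. The argument holds for any distribution, empirical or not, so the fact that $\hat{\rho}_\pi$ is empirical plays no special role.

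Write $p=\mathbb{P}(\tau\in\mathcal{F})$ and $q=\mathbb{P}(\tau\in\mathcal{G})=1-p$. Let $m_{\mathcal F},m_{\mathcal G}$ be the conditional means of $g$ on each region, and $v_{\mathcal F},v_{\mathcal G}$ the conditional variances. If $g$ is vector-valued, I will take $\mathrm{Var}$ to mean the trace of the covariance, so that the square in the statement reads as a squared norm; the scalar case is the same computation.

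\textbf{Step 1.} Define $Z=\mathbf{1}[\tau\in\mathcal{F}]$ and apply $\mathrm{Var}[g]=\mathbb{E}[\mathrm{Var}(g\mid Z)]+\mathrm{Var}(\mathbb{E}[g\mid Z])$. The first term is exactly $p\,v_{\mathcal F}+q\,v_{\mathcal G}$.

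\textbf{Step 2.} The random variable $\mathbb{E}[g\mid Z]$ takes only the values $m_{\mathcal F}$ and $m_{\mathcal G}$, with probabilities $p$ and $q$. Its overall mean is $\bar m=p m_{\mathcal F}+q m_{\mathcal G}$, so $m_{\mathcal F}-\bar m=q(m_{\mathcal F}-m_{\mathcal G})$ and $m_{\mathcal G}-\bar m=-p(m_{\mathcal F}-m_{\mathcal G})$. Hence
\[
\mathrm{Var}(\mathbb{E}[g\mid Z]) = p q^2 (m_{\mathcal F}-m_{\mathcal G})^2 + q p^2 (m_{\mathcal F}-m_{\mathcal G})^2 = pq\,(m_{\mathcal F}-m_{\mathcal G})^2,
\]
using $p+q=1$. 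This is the between-region term.

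\textbf{Step 3 (degenerate cases).} If $p=0$ or $q=0$, one conditional expectation is undefined. I will adopt the convention that any term multiplied by a zero probability vanishes. The identity then reduces to the variance on the single remaining region.

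\textbf{Step 4 (the consequence).} If $\mathrm{supp}(\pi)\subseteq\mathcal{F}$, then every rollout lies in $\mathcal{F}$, so $q=0$ under $\hat{\rho}_\pi$. Both $q\,v_{\mathcal G}$ and $pq(\cdot)^2$ are then zero, leaving $\mathrm{Var}[g]=\mathrm{Var}[g\mid\tau\in\mathcal{F}]$.

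There is no substantive obstacle. The only points needing care are the conventions for $p\in\{0,1\}$ and, in the vector case, the reading of $\mathrm{Var}$ as the trace of the covariance.
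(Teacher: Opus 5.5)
Your proof is correct and follows the intended argument. The proposition is the law of total variance for the indicator $Z=\mathbf{1}[\tau\in\mathcal{F}]$ over the partition $\{\mathcal{F},\mathcal{G}\}$, and the paper states it without a written proof. Your reduction of the between-region term to $pq\,(m_{\mathcal F}-m_{\mathcal G})^2$ and your handling of the degenerate case $q=0$, which the ``consequence'' clause relies on, supply the details the paper leaves implicit.
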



\textbf{Characterizing Compounding Bias under Sparse-reward Regimes.}
Under sparse rewards, local deviations go uncorrected until terminal evaluation, allowing trajectories to drift from feasible structure without intermediate signal. KL-regularized post-training sharpens this: when successful trajectories are rare under the reference policy, the reward signal is too weak to pull the learned policy away from reference-model behavior.

Consider the KL-regularized objective
$
\max_{\pi}\ \mathbb{E}_{\tau\sim\pi}[R(\tau)] - \lambda D_{\mathrm{KL}}(\pi\|\pi_{\mathrm{ref}}),
\label{eq:kl_objective_sca}
$
with terminal-only $R(\tau)\in[0,R_{\max}]$ and successful set $S=\{\tau:R(\tau)>0\}$. Then $\mathbb{E}_{\tau\sim\pi}[R(\tau)] \le R_{\max}\pi(S)$, while the KL penalty is dense over the full rollout space. The following theorem makes this mismatch precise: when $S$ is rare under $\pi_{\mathrm{ref}}$, the full-support KL-regularized optimizer cannot move meaningfully away from $\pi_{\mathrm{ref}}$, so locally inadmissible prefixes under $\pi_{\rm ref}$ remain after optimization.

\begin{restatable}[Reference anchoring under rare terminal rewards]{theorem}{thmScaKlCollapse}
\label{thm:sca_kl_collapse}
For \Cref{eq:kl_objective_sca} with terminal-only $R(\tau)\in[0,R_{\max}]$, let $S=\{\tau:R(\tau)>0\}$ and $p=\mathbb{P}_{\tau\sim\pi_{\mathrm{ref}}}(S)$. The full-support optimizer
$
\pi^*_{\Omega}(\tau) = \pi_{\mathrm{ref}}(\tau)\exp(R(\tau)/\lambda) / \mathbb{E}_{\tau'\sim\pi_{\mathrm{ref}}}[\exp(R(\tau')/\lambda)]
$
satisfies
\begin{equation}
D_{\mathrm{TV}}(\pi^*_{\Omega},\pi_{\mathrm{ref}}) \le \bigl(e^{R_{\max}/\lambda}-1\bigr)p.
\label{eq:tv_collapse_bound}
\end{equation}
Hence if $(e^{R_{\max}/\lambda}-1)p \ll 1$, the optimizer remains close to $\pi_{\mathrm{ref}}$. In the high-KL or weak-reward regime $R_{\max}\ll\lambda$, this becomes $D_{\mathrm{TV}}(\pi^*_{\Omega},\pi_{\mathrm{ref}}) \le (R_{\max}/\lambda)p + O(R_{\max}^2 p/\lambda^2)$. Thus, when successful trajectories are rare under the reference, terminal-only KL-regularized optimization has limited leverage to move probability mass away from reference-likely prefixes.
\end{restatable}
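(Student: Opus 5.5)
The plan is a direct computation with the closed-form optimizer. Write $w(\tau)=\exp(R(\tau)/\lambda)$ and $Z=\mathbb{E}_{\pi_{\mathrm{ref}}}[w]$. Since $R=0$ off $S$, we have $w=1$ on $S^c$ and $1< w\le e^{R_{\max}/\lambda}$ on $S$. Hence
\[
Z = 1 + \mathbb{E}_{\pi_{\mathrm{ref}}}\bigl[(w-1)\mathbf{1}_S\bigr],
\qquad
1\le Z\le 1+(e^{R_{\max}/\lambda}-1)p .
\]
If desired, I would first verify the stated Gibbs form of $\pi^*_\Omega$ via the Donsker--Varadhan variational identity. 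The theorem, however, takes this form as given, so I will simply use it.

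For the total variation bound I will use $D_{\mathrm{TV}}(\pi^*_\Omega,\pi_{\mathrm{ref}})=\sum_\tau(\pi^*_\Omega(\tau)-\pi_{\mathrm{ref}}(\tau))_+$, or equivalently the sum of the negative parts. Off $S$, $\pi^*_\Omega(\tau)=\pi_{\mathrm{ref}}(\tau)/Z\le\pi_{\mathrm{ref}}(\tau)$, so the positive part is concentrated on $S$. Therefore
\[
D_{\mathrm{TV}}=\pi_{\mathrm{ref}}(S^c)\Bigl(1-\tfrac1Z\Bigr)=(1-p)\frac{Z-1}{Z}.
\]
Using $Z\ge1$ and $Z-1\le(e^{R_{\max}/\lambda}-1)p$ gives $D_{\mathrm{TV}}\le (1-p)(e^{R_{\max}/\lambda}-1)p$. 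This is at most the claimed bound $(e^{R_{\max}/\lambda}-1)p$. An alternative route is $\sum_\tau \pi_{\mathrm{ref}}|w/Z-1|/2\le \mathbb{E}|w-1|$-type bounds, but the negative-part identity is cleaner and exact.

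For the weak-reward regime, expand $e^{x}-1=x+O(x^2)$ with $x=R_{\max}/\lambda\le$ a constant. The Lagrange remainder gives $e^x-1\le x+\tfrac{x^2}{2}e^{x}$. Multiplying by $p$ yields $(R_{\max}/\lambda)p+O(R_{\max}^2p/\lambda^2)$.

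The main obstacle is not analytic but bookkeeping: being careful that $R\ge0$, so $Z\ge1$. This is what makes off-$S$ mass only decrease and yields the one-sided identity. I would also note that measure-theoretic issues (continuous trajectory spaces) are handled by replacing sums with integrals against $\pi_{\mathrm{ref}}$, since $\pi^*_\Omega\ll\pi_{\mathrm{ref}}$.
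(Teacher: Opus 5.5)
Your setup is right and the final inequality is true, but the step you call ``cleaner and exact'' fails for general rewards. From $\pi^*_\Omega\le\pi_{\mathrm{ref}}$ off $S$ you correctly infer that the positive part of $\pi^*_\Omega-\pi_{\mathrm{ref}}$ lives on $S$. You then compute $D_{\mathrm{TV}}$ as a negative part supported only on $S^c$. That additionally requires $w(\tau)\ge Z$ for ($\pi_{\mathrm{ref}}$-almost) every $\tau\in S$.

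This holds when $R$ is constant on $S$, e.g.\ the binary reward $\mathbf{1}[\tau\in\mathcal T^*]$, because then $Z$ is a convex combination of $1$ and that constant. The theorem, however, allows any $R\in[0,R_{\max}]$. A low-reward success with $w(\tau)<Z$ also loses mass, so in general you only get $D_{\mathrm{TV}}\ge(1-p)(Z-1)/Z$, not equality. Your intermediate bound $(1-p)(e^{R_{\max}/\lambda}-1)p$ is in fact false. Take $S=\Omega$ (so $p=1$) with two atoms of mass $1/2$ and rewards $\epsilon<R_{\max}$. Your bound gives $0$, but $D_{\mathrm{TV}}=(w_2-w_1)/(2(w_1+w_2))>0$.

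The repair is one line and uses the half you had right: sum the positive parts on $S$. Since $w\ge1$ and $Z\ge1$, you have $(w/Z-1)_+\le(w-1)/Z$ on $S$. Hence $D_{\mathrm{TV}}=\mathbb{E}_{\pi_{\mathrm{ref}}}[(w/Z-1)_+\mathbf{1}_S]\le(Z-1)/Z\le Z-1\le(e^{R_{\max}/\lambda}-1)p$. This is the same bound $H/(1+H)$ the paper reaches. The paper writes $\pi^*_\Omega-\pi_{\mathrm{ref}}=\pi_{\mathrm{ref}}(h-H)/(1+H)$ with $h=w-1\ge0$ and $H=\mathbb{E}_{\pi_{\mathrm{ref}}}[h]$, then applies $|h-H|\le h+H$. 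Your Taylor-remainder step for the regime $R_{\max}\ll\lambda$ is fine.
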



\section{\model: Structural Admissibility-Guided Exploration}
\label{sec:alg}
\subsection{From SCA to \model}
\label{sec:sca_to_\model{}}

SCA turns the two biases into computational requirements: exploration bias requires feasible-support concentration without observing $\mathcal{T}^*$, and compounding bias requires prefix-level correction without dense process labels. Motivated by~\Cref{thm:sca_kl_collapse}, we address both through \textit{\textbf{S}tructural \textbf{A}dmissibility-\textbf{G}uided \textbf{E}xploration} (\model{}), which injects two complementary structural potentials into post-training.

\begin{wrapfigure}{r}{0.6\linewidth}
\vspace{-13pt}
    \centering        \includegraphics[width=\linewidth]{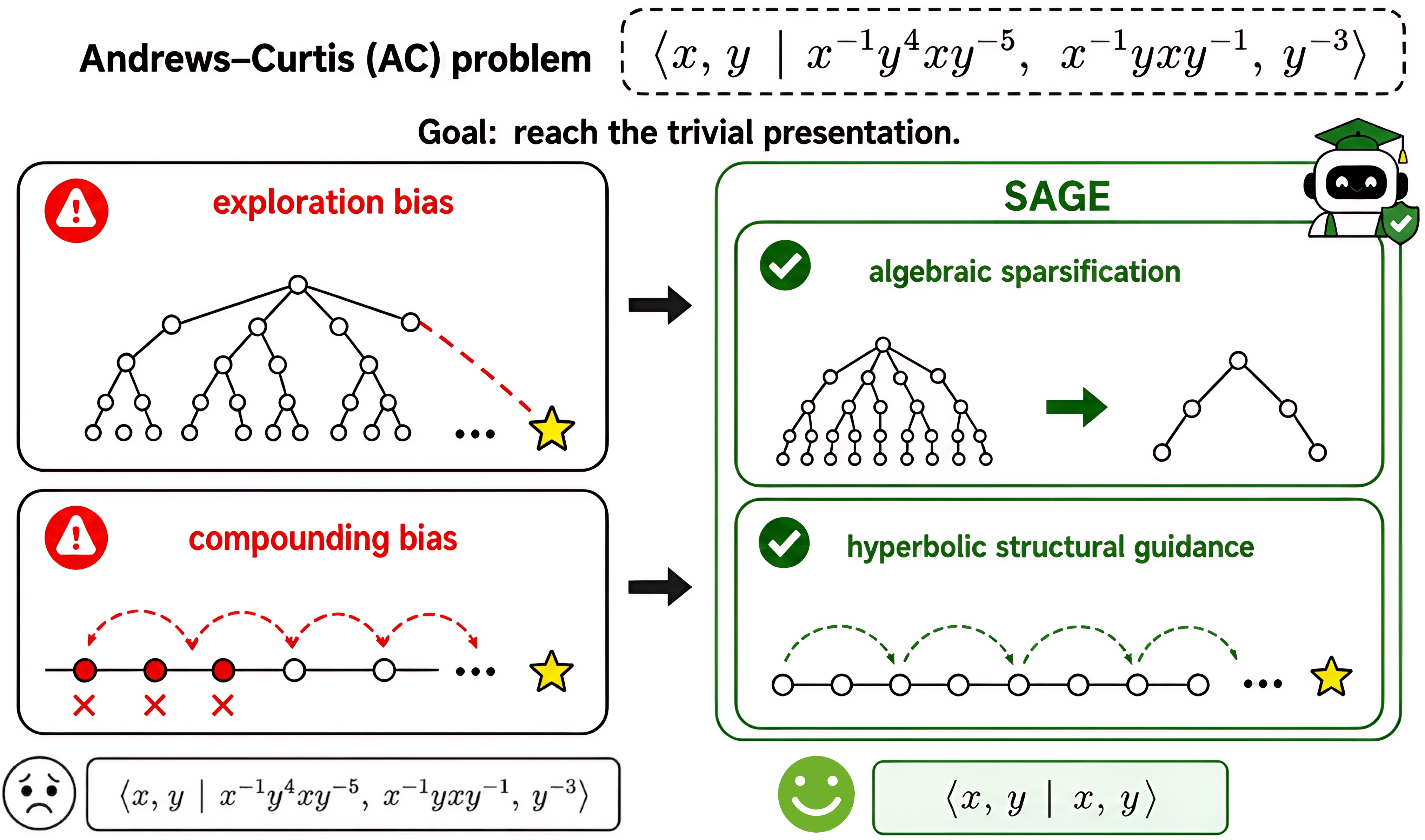}
    \caption{Overview of \model{} workflow. Outcome-only training induces exploration and compounding biases; \model{} injects algebraic sparsification and hyperbolic structural guidance during RL, yielding focused search, stable reasoning, and no additional inference-time cost.}
 \vspace{-10pt}   
    \label{fig:sage_overview}
\end{wrapfigure}

\textbf{Algebraic sparsification} $\Psi_{\mathcal{P}}$ targets exploration bias by biasing sampling toward operators that explain the current symbolic residual $r_t\in\mathbb{R}^d$ (\Cref{eq:volume_ratio}). For a candidate operator $L_j$ with associated subspace $S_j$:
\begin{equation}
\Psi_{\mathcal{P}}(r_t,S_j) = \frac{\|P_{S_j}r_t\|_2^2}{\|r_t\|_2^2+\varepsilon},
\label{eq:psi_p}
\end{equation}
where $P_{S_j}$ projects onto $S_j$. Larger $\Psi_{\mathcal{P}}$ indicates the operator addresses more of the unresolved residual, providing a soft compatibility score (not a replacement for $\mathrm{Adm}_{\mathfrak S}$) that promotes feasible-support concentration without inference-time pruning.

\textbf{Hyperbolic structural guidance} $\Psi_{\mathcal{H}}$ targets compounding bias by supplying prefix-level feedback before terminal rewards arrive (\Cref{thm:sca_kl_collapse}), exploiting hyperbolic geometry's natural fit for hierarchical reasoning structure. For state $s_t$, operator $L_j$, target structure $g$, and Poincaré embedding $\mathcal{E}(\cdot)$:
$
\Psi_{\mathcal{H}}(s_t,L_j,g) = \exp\!\left( -d_{\mathbb{D}}\bigl(\mathcal{E}(s_t\circ L_j),\mathcal{E}(g)\bigr) / \kappa \right),
$
with Poincaré distance $d_{\mathbb{D}}$ and sharpness $\kappa>0$. For Andrews--Curtis, $s_t$, $r_t$, $g$ are the current presentation, unresolved structure, and trivial presentation; for closed-form and free-form reasoning, $r_t$ and $g$ are training-time structural priors (Appendix~\ref{app:task_instantiation}). Both potentials are computable without $\mathcal{T}^*$ or process labels, and the resulting structural preferences are absorbed into the policy during post-training, so inference incurs no additional search or filtering.

\begin{proposition}[Non-vanishing structural advantage under sparse rewards]
\label{prop:vanish}
If $R_{\rm term}(\tau)=0$ across a rollout group and $\Psi_{\model}$ has nonzero within-group variance, the SAGE advantage $A_i = \bigl(\eta \bar{\Psi}_{\rm \model}(\tau_i) - \eta \tfrac{1}{G}\sum_j \bar{\Psi}_{\rm \model}(\tau_j)\bigr) / \bigl(\operatorname{std}_j(\eta \bar{\Psi}_{\rm \model}(\tau_j))+\epsilon\bigr)$ remains nonzero, providing an update signal even under uninformative terminal rewards.
\end{proposition}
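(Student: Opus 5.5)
The argument is a direct computation with the group-normalized advantage. I will show that when the terminal reward is identically zero, the advantage reduces to a standardized version of the structural potential, and that a non-constant vector has a non-constant standardization.

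\textbf{Step 1: reduce the reward to the structural part.} Assume the SAGE per-rollout reward has the additive form $R_{\rm term}(\tau_i)+\eta\bar{\Psi}_{\model}(\tau_i)$, with $\eta>0$, as the stated advantage implicitly presupposes. Since $R_{\rm term}(\tau_i)=0$ for every $i$ in the group, the terminal contribution cancels in both the group mean and the group standard deviation. This leaves exactly the displayed expression for $A_i$. Write $x_i:=\eta\bar{\Psi}_{\model}(\tau_i)$ and $\bar x:=\tfrac1G\sum_j x_j$. Then $A_i=(x_i-\bar x)/(\operatorname{std}_j(x_j)+\epsilon)$.

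\textbf{Step 2: the numerator is nonzero for some $i$.} By hypothesis $\bar{\Psi}_{\model}$ has nonzero within-group variance. Since $\eta>0$, we get $\operatorname{std}_j(x_j)=\eta\,\operatorname{std}_j(\bar{\Psi}_{\model}(\tau_j))>0$. Nonzero variance means the $x_i$ are not all equal, so at least one index satisfies $x_i\neq\bar x$. In fact the deviations sum to zero, so at least two indices have nonzero deviations of opposite sign.

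\textbf{Step 3: the denominator is finite and positive.} The denominator is at least $\epsilon>0$ and is finite. Hence $A_i\neq 0$ for every $i$ with $x_i\neq\bar x$. Moreover $\sum_i A_i^2=G\,\operatorname{std}^2/(\operatorname{std}+\epsilon)^2>0$, so the advantage vector is nonzero.

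\textbf{Step 4: conclude a nonzero update signal.} The policy-gradient estimate is $\sum_i A_i\nabla\log\pi_\theta(\tau_i)$, and its coefficients $A_i$ are not all zero. It can therefore be nonzero, unlike the terminal-only case, where every $A_i=0$ and the estimate vanishes identically. This should be phrased carefully. The claim is that the advantage weights are nonzero. The gradient vector itself could still cancel for degenerate score functions, so I will state the conclusion at the level of advantages, matching the proposition.

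The only real obstacle is interpretive: the statement says "remains nonzero," and this cannot hold for every $i$. A rollout whose potential equals the group mean gets $A_i=0$. I will prove the version "$A_i\neq0$ for some (indeed at least two) $i$, and the advantage vector is nonzero," and note this explicitly.
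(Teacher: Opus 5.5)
Your proposal is correct. The paper gives no separate proof of Proposition~\ref{prop:vanish}; it treats the claim as immediate from the definition of $\widetilde{R}$ and the group-normalized advantage. Your direct computation is exactly the argument the statement tacitly relies on: the terminal term cancels from both the mean and the standard deviation, a nonconstant vector $x_j=\eta\bar{\Psi}_{\model}(\tau_j)$ has nonzero deviations from its mean, and the denominator is finite and positive.

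Your caveats are genuine corrections to the statement rather than pedantry. The literal reading ``$A_i\neq 0$ for every $i$'' is false, since a rollout whose potential equals the group mean receives $A_i=0$. The conclusion also needs $\eta\neq 0$.

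One further reading you should state explicitly, instead of attributing it to the hypothesis as written. ``Nonzero within-group variance'' has to mean variance of the trajectory-level average $\bar{\Psi}_{\model}(\tau_j)$ across the $G$ rollouts. Variation of the per-step values $\Psi_{\model}(s_t,a_t)$ within the group does not suffice: step sequences $(0,1)$ and $(1,0)$ have equal averages, which gives $A_i=0$ for all $i$.

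Two small remarks. Your identity $\sum_i A_i^2=G\,\mathrm{std}^2/(\mathrm{std}+\epsilon)^2$ assumes the population convention for the standard deviation; the sample convention replaces $G$ by $G-1$, which changes nothing. Also, Step~1 goes through verbatim whenever $R_{\rm term}$ is merely constant across the group, e.g.\ when every rollout succeeds. That is the full sense in which a terminal-only group advantage is uninformative, so your argument covers more than the all-zero case.
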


\subsection{\model{}}
\label{sec:\model{}_training}

The two potentials combine into the \model{} mechanism: $\Psi_{\model}(s_t,a_t) = \alpha\,\Psi_{\mathcal{P}}(s_t,a_t) + \gamma\,\Psi_{\mathcal{H}}(s_t,a_t)$, with $\alpha,\gamma\ge 0$ controlling relative strength.

\textbf{Structure-Guided Sampling.}
\model{} modulates the rollout distribution via
$
P_{\mathrm{sample}}(a_t^{(k)}\mid s_t,\mathcal C_t) \propto \exp\!\left( \bar{\ell}_{\theta_{\mathrm{old}}}(a_t^{(k)}\mid s_t) + \lambda \Psi_{\model{}}(s_t,a_t^{(k)}) \right), \quad a_t^{(k)}\in\mathcal C_t,
$
where $\lambda\ge 0$ controls guidance strength and $\mathcal C_t$ is, for non-symbolic tasks, a finite candidate set sampled from $\pi_{\theta_{\mathrm{old}}}$ (normalization is over $\mathcal C_t$, not the full vocabulary). This implements both SCA requirements as soft biases-feasible-support concentration via $\Psi_{\mathcal{P}}$, prefix-level signal via $\Psi_{\mathcal{H}}$-rather than projecting onto $\mathcal{F}$ through hard constraints.

\textbf{Guidance-Augmented Advantage.}
For each $\tau_i$, \model{} forms a reward combining terminal outcome with structural guidance:
$
\widetilde{R}(\tau_i) = R_{\mathrm{term}}(\tau_i) + \eta\cdot \tfrac{1}{T_i} \sum_{t=1}^{T_i} \Psi_{\model}(s_t^i,a_t^i),
$
with group-relative advantage $A_i = (\widetilde{R}(\tau_i) - \mathrm{mean}_j\,\widetilde{R}(\tau_j)) / (\mathrm{std}_j\,\widetilde{R}(\tau_j)+\varepsilon)$. This makes the sparse terminal signal usable in low-resource regimes by supplementing it with dense structural feedback.

\textbf{Policy Update.}
We optimize a step-level KL-regularized group-relative objective:
\begin{equation}
\mathcal L_{\mathrm{SAGE}}(\theta) = \tfrac{1}{G}\sum_{i=1}^{G} \tfrac{1}{T_i}\sum_{t=1}^{T_i} \left[ \min\!\left\{ \rho_{i,t}(\theta)A_i, \mathrm{clip}(\rho_{i,t}(\theta),1-\epsilon,1+\epsilon)A_i \right\} - \delta \log \tfrac{\pi_\theta(a_{i,t}\mid s_{i,t})}{\pi_{\mathrm{ref}}(a_{i,t}\mid s_{i,t})} \right],
\end{equation}
with $\rho_{i,t}(\theta) = \pi_\theta(a_{i,t}\mid s_{i,t}) / \pi_{\theta_{\mathrm{old}}}(a_{i,t}\mid s_{i,t})$. For concentration analysis, we use the trajectory-level distribution
$
P_{\mathrm{EBM}}(\tau) = \pi_{\theta_{\mathrm{old}}}(\tau) \exp(\lambda\Psi_{\mathrm{SAGE}}(\tau)) / Z_\lambda, \Psi_{\mathrm{SAGE}}(\tau) = \sum_{t=1}^{T} \Psi_{\mathrm{SAGE}}(s_t,a_t).
$
\begin{proposition}[Guidance-Induced Feasible-Support Concentration]
\label{prop:\model{}_feasible_concentration}
Let $P_{\mathrm{EBM}}$ be the trajectory-level distribution, $\Psi_{\model}(\tau)=\sum_{t=1}^{T}\Psi_{\model}(s_t,a_t)$, and $\mathcal{F}$, $\mathcal{G}=\Omega\setminus\mathcal{F}$ the SCA-feasible and inadmissible regions. If there exists $\Delta>0$ with $\inf_{\tau\in\mathcal{F}}\Psi_{\model}(\tau) - \sup_{\tau\in\mathcal{G}}\Psi_{\model}(\tau) \ge \Delta$, then
$
P_{\mathrm{EBM}}(\mathcal{G}) / P_{\mathrm{EBM}}(\mathcal{F}) \le e^{-\lambda\Delta}\, \pi_{\theta_{\mathrm{old}}}(\mathcal{G}) / \pi_{\theta_{\mathrm{old}}}(\mathcal{F}).
$
Thus, increasing $\lambda$ exponentially suppresses locally inadmissible mass relative to feasible mass.
\end{proposition}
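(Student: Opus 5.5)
The argument is a direct comparison of the Gibbs-reweighted masses on the two regions. Write $c_{\mathcal F}:=\inf_{\tau\in\mathcal F}\Psi_{\model}(\tau)$ and $c_{\mathcal G}:=\sup_{\tau\in\mathcal G}\Psi_{\model}(\tau)$, so the hypothesis reads $c_{\mathcal F}-c_{\mathcal G}\ge\Delta$. By definition of $P_{\mathrm{EBM}}$,
\[
P_{\mathrm{EBM}}(A)=\frac{1}{Z_\lambda}\sum_{\tau\in A}\pi_{\theta_{\mathrm{old}}}(\tau)\,e^{\lambda\Psi_{\model}(\tau)}
\]
for any measurable $A\subseteq\Omega$. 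A sum can be replaced by an integral against $\pi_{\theta_{\mathrm{old}}}$ if the trajectory space is not countable.

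The key step is that $Z_\lambda$ cancels in the ratio, so no control of the partition function is needed. Since $\lambda\ge 0$ and $x\mapsto e^{\lambda x}$ is nondecreasing, we have the upper bound
\[
\sum_{\tau\in\mathcal G}\pi_{\theta_{\mathrm{old}}}(\tau)e^{\lambda\Psi_{\model}(\tau)}\le e^{\lambda c_{\mathcal G}}\,\pi_{\theta_{\mathrm{old}}}(\mathcal G)
\]
for the numerator. For the denominator we have the lower bound
\[
\sum_{\tau\in\mathcal F}\pi_{\theta_{\mathrm{old}}}(\tau)e^{\lambda\Psi_{\model}(\tau)}\ge e^{\lambda c_{\mathcal F}}\,\pi_{\theta_{\mathrm{old}}}(\mathcal F).
\]
Dividing the first by the second gives
\[
\frac{P_{\mathrm{EBM}}(\mathcal G)}{P_{\mathrm{EBM}}(\mathcal F)}\le e^{-\lambda(c_{\mathcal F}-c_{\mathcal G})}\,\frac{\pi_{\theta_{\mathrm{old}}}(\mathcal G)}{\pi_{\theta_{\mathrm{old}}}(\mathcal F)}\le e^{-\lambda\Delta}\,\frac{\pi_{\theta_{\mathrm{old}}}(\mathcal G)}{\pi_{\theta_{\mathrm{old}}}(\mathcal F)}.
\]
The last inequality again uses $\lambda\ge0$. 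The closing remark in the statement then follows because the factor $e^{-\lambda\Delta}$ is exponentially decreasing in $\lambda$.

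The only real obstacle is well-definedness, so I would state the needed conditions explicitly before the computation. Three things must hold:
\begin{itemize}
\item $\pi_{\theta_{\mathrm{old}}}(\mathcal F)>0$, so that both ratios are defined and the denominator lower bound is strictly positive.
\item $Z_\lambda<\infty$. This is automatic here, since each $\Psi_{\mathcal P}\in[0,1]$ and each $\Psi_{\mathcal H}\in(0,1]$, so $\Psi_{\model}(\tau)\le T(\alpha+\gamma)$ is bounded.
\item $c_{\mathcal F}$ and $c_{\mathcal G}$ are finite. This follows from the same boundedness, since the potentials are nonnegative and bounded.
\end{itemize}
If $\pi_{\theta_{\mathrm{old}}}(\mathcal G)=0$, both sides vanish and the claim is trivial.
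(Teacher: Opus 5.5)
Your proof is correct and is essentially the paper's argument, which appears in the appendix as the soft-suppression proposition under a relative margin: bound the tilted mass on $\mathcal{G}$ above by $e^{\lambda\sup_{\mathcal{G}}\Psi}\,\pi_{\theta_{\mathrm{old}}}(\mathcal{G})$ and the tilted mass on $\mathcal{F}$ below by $e^{\lambda\inf_{\mathcal{F}}\Psi}\,\pi_{\theta_{\mathrm{old}}}(\mathcal{F})$, with no control of the partition function needed. The only cosmetic difference is that the paper first shifts $\Psi$ by $\sup_{\mathcal{G}}\Psi$ and states the conclusion as a bound on $P(\mathcal{G})$ itself; this is equivalent to your ratio form because $x\mapsto x/(1-x)$ is increasing.
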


\section{Experiment}
\label{sec:exp}
We comprehensively evaluate \model{} across 12 diverse benchmarks, 7 backbone models and 3 competitive baselines. Our experiments address the following three questions:
\textbf{Q1:} \textit{Does \model{} improve reasoning performance across diverse benchmarks and model scales?}
\textbf{Q2:} \textit{Does \model{} alleviate exploration and compounding biases on long-horizon reasoning tasks?}
\textbf{Q3:} \textit{Do algebraic sparsification and hyperbolic structural guidance each contribute to the gains predicted by SCA?}

\Cref{sec:setup} describes the experimental setup. \Cref{sec:q1} reports performance on mathematical and free-form natural reasoning benchmarks. \Cref{sec:q3} evaluates long-horizon symbolic reasoning as a stress test for exploration and compounding biases. \Cref{sec:ablation} is the ablation study of the two components. We report implementation details and additional results in Appendix~\ref{app:exp}.

\subsection{Experimental Settings}
\label{sec:setup}
\textbf{Models.}
We evaluate \model{} across multiple model backbones, including Qwen3.5 (2B, 9B, 35B)~\citep{team2026qwen3}, Qwen3.6-27B~\citep{team2026qwen3}, Qwen3-32B~\citep{yang2025qwen3}, DeepSeekMath-7B~\citep{shao2024deepseekmath}, DeepSeek-Prover-V2-7B~\citep{ren2025deepseekprover}, Kimina-Prover (7B, Distill-8B)~\citep{wang2025kiminaprover} and Llama-3.3-70B-Instruct~\citep{patterson2022carbon}.

\textbf{Benchmarks.}
We evaluate across three families of tasks: (i) \emph{Closed-form Mathematical Reasoning: } MATH~\citep{lightman2023lets}, Minerva Math~\citep{lewkowycz2022}, AMC23~\citep{mathai_amc23_2025}, AIME 2024~\citep{huggingfaceh4_aime2024_2025}, OlympiadBench~\citep{he2024olympiadbench}, GSM8K~\citep{cobbe2021training}, and Putnam~\citep{tsoukalas2024putnam}; (ii) \emph{Free-form Natural Reasoning: } MMLU-Pro~\citep{wang2024mmlupro}, GPQA~\citep{rein2023gpq}, BBH-H~\citep{suzgun2022challenging}, and ARC-C~\citep{allenai:arc}; and (iii) \emph{Real-world Long-horizon  Reasoning: } AC problem task, for which we follow~\citet{shehper2025make} to construct 1190 AC presentations with $n \leq 7$ and $\lvert w \rvert \leq 7$.

\textbf{Baselines.}
We compare with 4 representative baselines, including supervised fine-tuning (SFT),  GRPO~\citep{shao2024deepseekmath}, EMPO~\citep{zhang2025right} and GRPO-PRM~\citep{sullivan2025grpo}.
All methods use comparable rollout budgets, generation lengths, and decoding constraints.

\begin{table}[h!]
\vspace{-10pt}
\centering
\setlength{\tabcolsep}{3pt}
\caption{Accuracy (\%) on mathematical reasoning benchmarks. Best in \textbf{bold}, second best \underline{underlined}.}
\label{tab:math_full}
\resizebox{\textwidth}{!}{
\begin{tabular}{l c c c c c c c c}
\toprule
\textbf{Model} & \textbf{MATH} & \textbf{Minerva} & \textbf{Olympiad} & \textbf{AIME24} & \textbf{AMC23} & \textbf{GSM8K} & \textbf{Putnam} & \textbf{Avg.} \\
\midrule
\multicolumn{9}{l}{\emph{Flagship}} \\
Llama-3.3-70B-Instruct & 66.08 & 33.61 & 32.94 & 17.31 & 29.02 & 80.47 & 9.91 & 38.48 \\
\midrule
\multicolumn{9}{l}{\emph{2B models}} \\
Qwen3.5           & 50.64 & 11.62 & 24.58 & 9.41  & 43.36 & 47.38 & 3.66 & 27.24 \\
Qwen3.5 w/SFT     & 60.41 & 26.52 & 27.96 & 3.74  & 37.88 & 55.06 & 4.63 & 30.89 \\
Qwen3.5 w/GRPO    & \underline{73.39} & \underline{33.27} & 33.86 & \textbf{16.05} & 50.94 & 63.12 & 6.79 & 39.63 \\
Qwen3.5 w/EMPO    & 71.94 & 31.39 & \underline{37.04} & 12.88 & \underline{54.71} & \underline{66.54} & \underline{7.48} & \underline{40.28} \\
\textbf{Qwen3.5 w/\model{}} & \textbf{74.61} & \textbf{34.02} & \textbf{38.25} & \underline{15.72} & \textbf{55.81} & \textbf{67.06} & \textbf{9.31} & \textbf{42.11} \\
\midrule
\multicolumn{9}{l}{\emph{9B models}} \\
Qwen3.5           & 65.11 & 14.34 & 27.31 & 6.38  & 39.73 & 45.59 & 4.61 & 29.01 \\
Qwen3.5 w/SFT     & 77.48 & 29.73 & \underline{40.15} & \underline{24.20} & 62.93 & \underline{70.91} & \underline{9.12} & \underline{44.93} \\
Qwen3.5 w/GRPO    & 75.96 & \underline{40.41} & 38.82 & 19.51 & 56.96 & 62.77 & 7.66 & 43.16 \\
Qwen3.5 w/EMPO    & \underline{78.24} & 39.27 & 37.03 & 20.88 & \textbf{64.88} & 65.55 & 8.01 & 44.84 \\
\textbf{Qwen3.5 w/\model{}} & \textbf{79.97} & \textbf{41.38} & \textbf{41.59} & \textbf{25.58} & \underline{63.91} & \textbf{71.72} & \textbf{11.47} & \textbf{47.95} \\
\midrule
\multicolumn{9}{l}{\emph{35B models}} \\
Qwen3.5              & 70.28 & 32.36 & 48.81 & 35.05 & 42.33 & 76.28 & 11.36 & 45.21 \\
Qwen3.5 w/SFT        & 74.69 & 38.57 & 52.96 & 41.84 & 49.18 & 82.73 & 14.67 & 50.66 \\
Qwen3.5 w/GRPO       & \underline{82.18} & 48.07 & \underline{65.31} & 57.94 & \underline{68.92} & 91.51 & 19.48 & 61.92 \\
Qwen3.5 w/EMPO       & 81.84 & \underline{48.39} & 64.41 & \textbf{62.31} & 67.38 & \underline{92.29} & \underline{19.97} & \underline{62.37} \\
\textbf{Qwen3.5 w/\model{}} & \textbf{84.72} & \textbf{51.34} & \textbf{68.26} & \underline{62.04} & \textbf{70.87} & \textbf{94.16} & \textbf{22.62} & \textbf{64.86} \\
\bottomrule
\end{tabular}}
\vspace{-5pt}
\end{table}

\subsection{Main Results}
\label{sec:q1}
\Cref{tab:math_full} shows \model{} consistently improves outcome-level accuracy across scales without gold reasoning traces or process labels. At the 2B, 9B, and 35B scales, average accuracy improves from $27.24\%$ to $42.11\%$, $29.01\%$ to $47.95\%$, and $45.21\%$ to $64.86\%$, surpassing the strongest baseline by $+1.83$, $+3.02$, and $+2.49$ points respectively. Notably, the 35B \model{} variant ($64.86\%$) substantially exceeds the Llama-3.3-70B-Instruct flagship ($38.48\%$) at roughly half the parameters, with consistent gains on the hardest competition-style benchmarks (Olympiad, AIME24, Putnam).

\textbf{Free-form Natural Reasoning.} The benefits generalize beyond structured formal domains (\Cref{tab:natural_reasoning}). At 9B, \model{} raises MMLU-Pro average from $37.91\%$ to $39.99\%$, BBH-H from $44.04\%$ to $45.31\%$, and ARC-C from $39.73\%$ to $42.04\%$; similar scaling holds at 27B (BBH-H $60.25\%$, ARC-C $58.11\%$) and 35B (BBH-H $69.07\%$, ARC-C $66.41\%$), all leading among question-only post-training methods. Five-seed mean-std results are in Appendix~\ref{app:addition}.

\subsection{Long-Horizon Reasoning} 
\label{sec:q3}
We evaluate the AC problem with two metrics~\citep{yang2023lean,hsiang2025leandojo}: \emph{AC Validity} (percentage of syntactically and logically valid steps, a proxy for local precision) and \emph{Lean-Verified Proofs} (success rate of compiler-checked proofs, the gold standard for end-to-end rigor). As shown in \Cref{fig:ac}, \model{} consistently outperforms baselines on both: AC Validity gains of $+19.2\%$ to $+26.0\%$ across architectures, and Lean-Verified gains over $+13\%$ in every case and Qwen3 reaches a nearly 8-fold increase over base. These results are consistent with the SCA prediction that controlling exploration and compounding biases yields more stable long-horizon trajectories.

\begin{table}[h!]
\vspace{-10pt}
\centering
\setlength{\tabcolsep}{3pt}
\caption{
Accuracy (\%) on free-form natural reasoning benchmarks. The best is in \textbf{bold} with second best in \underline{underline}.
}
\label{tab:natural_reasoning}
\resizebox{0.9\textwidth}{!}{
\begin{tabular}{l c c c c c c c c}
\toprule
\textbf{Model} & \textbf{STEM}
& \multicolumn{4}{c}{\textbf{MMLU-Pro}} 
& \textbf{GPQA} 
& \textbf{BBH-H} 
& \textbf{ARC-C} \\
\cmidrule(lr){3-6}
 &  &  \footnotesize{Humanity} & \footnotesize{Social} & Other & Avg. &  &  &  \\
\midrule
\multicolumn{9}{l}{\emph{9B models}} \\
Qwen3.5            & 12.71 & 8.02  & 14.95 & 10.21 & 11.06 & 10.91 & 21.58 & 18.49 \\
Qwen3.5 w/SFT      & 20.18 & 11.36 & 28.97 & 19.22 & 19.85 & 12.31 & 32.44 & 29.56 \\
Qwen3.5 w/GRPO     & \underline{33.38} & \underline{28.31} & \underline{50.41} & \underline{39.26} & \underline{39.33} & 18.21 & 41.69 & 37.82 \\
Qwen3.5 w/EMPO     & 32.57 & 27.32 & 48.73 & 37.69 & 37.91 & \textbf{21.11} & \underline{44.04} & \underline{39.73} \\
\textbf{Qwen3.5 w/\model{}} & \textbf{34.11} & \textbf{29.08} & \textbf{51.17} & \textbf{39.71} & \textbf{39.99} & \underline{20.86} & \textbf{45.31} & \textbf{42.04} \\
\midrule
\multicolumn{9}{l}{\emph{27B models}} \\
Qwen3.6            & 30.29 & 24.34 & 46.55 & 35.31 & 35.40 & 16.09 & 38.70 & 34.78 \\
Qwen3.6 w/SFT      & 34.18 & 28.51 & 41.52 & 37.28 & 35.77 & 22.67 & 45.27 & 41.12 \\
Qwen3.6 w/GRPO     & \textbf{57.74} & \underline{37.02} & \textbf{65.16} & \underline{57.55} & \underline{53.24} & \textbf{34.29} & 55.74 & 51.78 \\
Qwen3.6 w/EMPO     & 53.96 & 35.58 & 60.04 & 52.18 & 49.27 & 29.52 & \underline{57.19} & \underline{53.26} \\
\textbf{Qwen3.6 w/\model{}} & \underline{56.31} & \textbf{37.91} & \underline{64.43} & \textbf{58.25} & \textbf{53.53} & \underline{32.51} & \textbf{60.25} & \textbf{57.11} \\
\midrule
\multicolumn{9}{l}{\emph{35B models}} \\
Qwen3.5              & 45.08 & 36.31 & 52.29 & 44.28 & 44.29 & 31.05 & 47.52 & 45.37 \\
Qwen3.5 w/SFT        & 49.84 & 38.26 & 54.47 & 48.62 & 47.12 & 28.97 & 52.96 & 49.18 \\
Qwen3.5 w/GRPO       & \underline{63.58} & \underline{43.19} & \underline{69.08} & \underline{60.71} & \underline{57.66} & \underline{35.78} & 63.29 & 60.91 \\
Qwen3.5 w/EMPO       & 61.96 & 42.02 & 68.61 & 59.54 & 56.72 & 35.43 & \underline{65.02} & \underline{62.97} \\
\textbf{Qwen3.5 w/\model{}} & \textbf{65.27} & \textbf{44.51} & \textbf{71.22} & \textbf{62.25} & \textbf{59.33} & \textbf{38.58} & \textbf{69.07} & \textbf{66.41} \\
\bottomrule
\end{tabular}
}
\vspace{-5pt}
\end{table}

\begin{figure}[h!]
\vspace{-2pt}
    \centering    \includegraphics[width=1\linewidth]{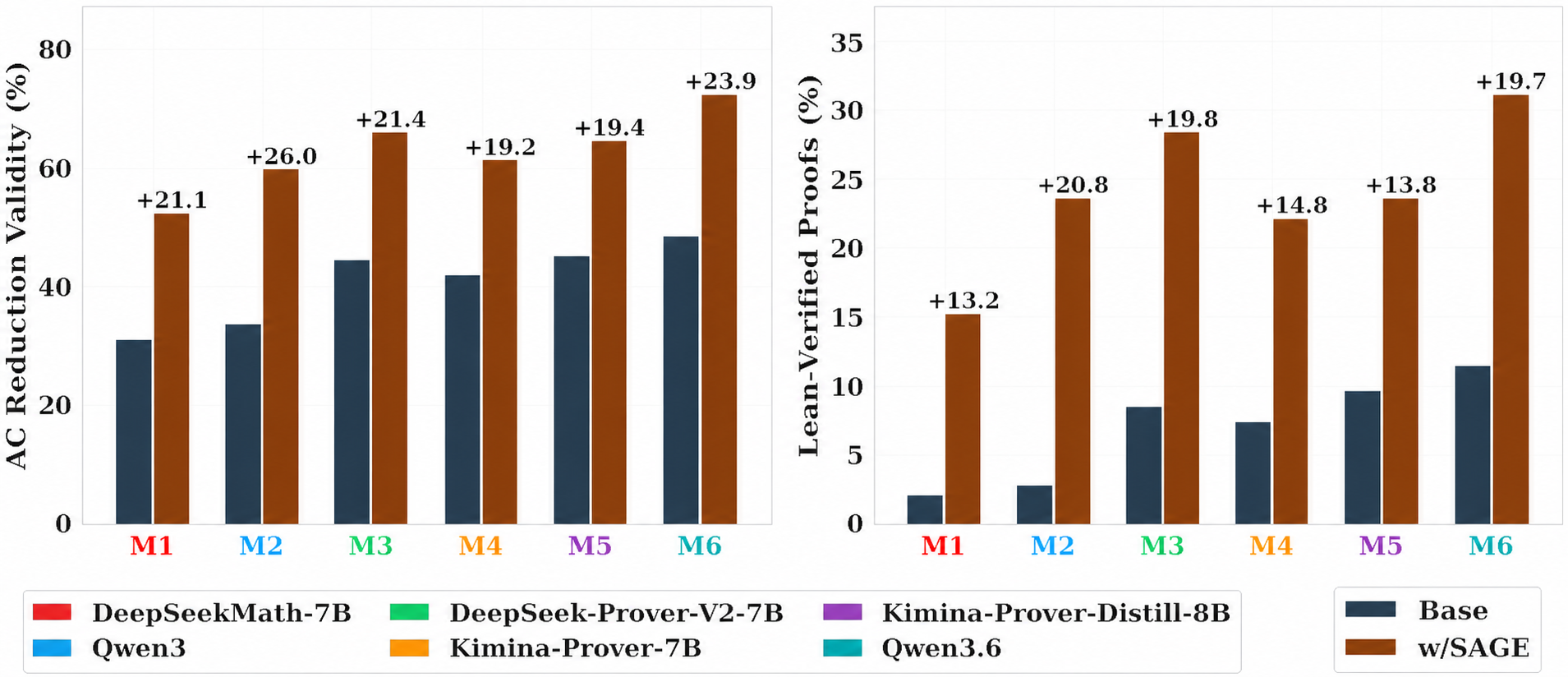}
    \caption{Comparative performance of models with \model{} versus base models across two primary metrics: AC Validity (Left) and Lean-Verified Proofs (Right). Numbers annotated above bars indicate the absolute percentage point improvement.}
    \vspace{-5pt}
    \label{fig:ac}
\end{figure}

\subsection{Component Ablation Analysis}
\label{sec:ablation}

We isolate the two guidance components: $\Psi_{\mathcal P}$ (algebraic sparsification, controlling exploration bias) and $\Psi_{\mathcal H}$ (hyperbolic structural guidance, mitigating compounding bias). On Qwen3.5-9B mathematical and free-form reasoning, removing either degrades performance (Table~\ref{tab:ablations_core_qwen7b}), showing the two signals are complementary rather than redundant. Two controls rule out a generic dense-shaping explanation: replacing $\Psi_{\mathcal H}$ with Euclidean distance weakens performance and shuffling target anchors substantially reduces both accuracy and reward density.


\textbf{Comparison with learned process rewards.} 
Table~\ref{tab:prm_main} shows that GRPO-PRM improves over GRPO, confirming that dense process feedback helps under sparse rewards. However, \model{} remains 
\begin{wraptable}{r}{0.5\linewidth}
\vspace{5pt}
\centering
\small
\setlength{\tabcolsep}{3pt}
\renewcommand{\arraystretch}{1.05}
\vspace{-8pt}
\caption{Core ablations on Qwen3.5-9B. All variants share the same entropy-filtered training subset, rollout budget, decoding constraints, optimization steps, and KL coefficient.}
\label{tab:ablations_core_qwen7b}
\begin{tabular}{@{}lcccc@{}}
\toprule
\textbf{Variant} & \multicolumn{2}{c}{\textbf{Olympiad}} & \multicolumn{2}{c}{\textbf{BBH-H}} \\
\cmidrule(lr){2-3} \cmidrule(lr){4-5}
 & \textbf{Acc} & \textbf{Rew./1k} & \textbf{Acc} & \textbf{Rew./1k} \\
\midrule
GRPO & 38.82 & 16.08 & 41.69 & 20.06 \\
EMPO & 37.03 & 15.03 & 44.04 & 19.02 \\
\midrule
\model{} w/o $\Psi_{\mathcal H}$ & 39.84 & 18.47 & 39.06 & 22.36 \\
\model{} w/o $\Psi_{\mathcal P}$ & 39.12 & 17.68 & 38.85 & 21.18 \\
\model{} w/ Euclidean            & 38.01 & 16.92 & 38.02 & 20.89 \\
\model{} w/ Shuffled $g$         & 37.99 & 17.01 & 37.83 & 19.91 \\
\model                            & \textbf{41.59} & \textbf{21.38} & \textbf{45.31} & \textbf{25.76} \\
\bottomrule
\end{tabular}
\end{wraptable}
stronger, especially on the AC problem, which demonstrates that the gains are thus not explained by generic dense reward shaping alone, but by target-aligned structural guidance.


\textbf{Direct bias ablation on AC task.}
To directly test whether the two components mitigate the symbolic long-horizon failure modes predicted by SCA, we repeat the ablation on AC task. As shown in Table~\ref{tab:ac_component_ablation_main}, removing either $\Psi_{\mathcal P}$ or $\Psi_{\mathcal H}$ reduces AC Validity, AC Path Solving, and Lean-Verified success, while full \model{} achieves the strongest end-to-end verified performance-confirming that both signals are needed when structural validity and long-horizon extension are jointly required.

\begin{table}[h!]
\vspace{10pt}
\centering
\caption{Left: comparison with a learned process-reward baseline (Qwen3.5-35B). Right: component ablation on AC task (Qwen3.5-35B).}
\label{tab:combined_35b}
\begin{minipage}{0.46\linewidth}
\centering
\small
\setlength{\tabcolsep}{4pt}
\subcaption{PRM baseline comparison.}
\label{tab:prm_main}
\begin{tabular}{@{}lccc@{}}
\toprule
Method & Olym. & BBH-H & Lean \\
\midrule
GRPO     & 65.31 & 63.29 & 14.64 \\
GRPO-PRM & 63.27 & 59.12 & 17.36 \\
\model{} & \textbf{68.26} & \textbf{69.07} & \textbf{23.69} \\
\bottomrule
\end{tabular}
\end{minipage}\hfill
\begin{minipage}{0.52\linewidth}
\vspace{-5pt}
\centering
\small
\setlength{\tabcolsep}{3pt}
\subcaption{AC component ablation.}
\label{tab:ac_component_ablation_main}
\begin{tabular}{@{}lccc@{}}
\toprule
Variant & AC Valid. & AC Path & Lean \\
\midrule
GRPO & 54.28 & 23.05 & 14.64 \\
EMPO & 53.18 & 21.52 & 13.78 \\
\model{} w/o $\Psi_{\mathcal H}$ & 57.02 & 27.14 & 18.82 \\
\model{} w/o $\Psi_{\mathcal P}$ & 56.31 & 25.98 & 18.07 \\
\model{}                          & \textbf{59.83} & \textbf{31.76} & \textbf{23.69} \\
\bottomrule
\end{tabular}
\end{minipage}
\end{table}


\section{Related Work}
\label{sec:rel}
\textbf{Reinforcement Learning and Supervision for Reasoning.}
Outcome-based RL has become a dominant paradigm for improving LLM reasoning \citep{ouyang2022training, shao2024deepseekmath, yu2025dapo}, yet its effectiveness degrades as reasoning horizon $T$ grows and terminal feedback becomes sparse \citep{suo2025long}. Process-level supervision via verifiers, formal proofs, or step-wise reward models \citep{yang2023lean, lightman2023lets} mitigates this issue but shifts the bottleneck to annotation cost and verifier coverage.

\textbf{Exploration Strategies and Optimization Biases.}
A parallel line of work improves exploration under sparse rewards through entropy regularization, syntactic constraints, empirical consistency, and self-improvement objectives \citep{zhang2025right, she2025rprm, zhang2025count, yue2025does, gai2025differential}. While these reduce brittleness, they treat exploration as sampling or reward shaping rather than a structural property of the reasoning space—despite evidence that sparse feedback induces persistent trajectory-selection distortions \citep{wu2024ocean}. How the geometry of complex reasoning structures shapes the feasible region of long-horizon trajectories, and how to exploit it for principled exploration, remains largely open.

\section{Conclusion}
\label{sec:con}
In this work, we study long-horizon reasoning under sparse and delayed rewards, showing that its failures arise not only from task difficulty but from two structural biases in outcome-based post-training: exploration bias and compounding bias. We introduce SCA as a theoretical lens for characterizing these biases through prefix-closed feasible regions and for identifying two mitigation requirements: feasible-support concentration and prefix-level structural correction.
Building on SCA, we propose \model{}, a topology-guided post-training framework that implements these requirements through algebraic sparsification and hyperbolic structural guidance. Across 12 benchmarks and 7 model backbones, \model{} consistently improves over strong post-training baselines. On the open real-world AC task, \model{} achieves a nearly 8-fold improvement. 

\newpage

\bibliographystyle{icml2026} 
\bibliography{reference}
\newpage

\appendix
\section{Formal Properties of Symbolic Closure Analysis}
\label{app:sca}

We restate the formal object used by Symbolic Closure Analysis (SCA).
SCA is an analytical lens for characterizing feasible-support concentration under local admissibility.
It is not itself an inference-time search procedure.

Let $\mathfrak S$ be a domain-specific symbolic interface that specifies admissible operators and a local admissibility predicate.
A trajectory $\tau=\{(v_t,op_t)\}_{t=1}^T$ belongs to the SCA-feasible region $\mathcal{F}$ iff every local transition is admissible:
\begin{equation}
\tau \in \mathcal{F}
\quad \Longleftrightarrow \quad
\forall t,\ 
\mathrm{Adm}_{\mathfrak S}\!\bigl(\rho(v_t),\, op_t,\, v_t\bigr)=1.
\label{eq:sca_def_app}
\end{equation}
Here $\rho(v_t)$ denotes the predecessor context of $v_t$, and $op_t$ is the operator used to obtain $v_t$.

\paragraph{Local admissibility is not process supervision.}
The predicate $\mathrm{Adm}_{\mathfrak S}$ checks only local well-formedness or rule consistency under the task interface $\mathcal{S}$.
It does not provide ground-truth solution steps, does not reveal successful trajectories, and does not use terminal outcome labels.
Thus, SCA separates local admissibility from global correctness.

\paragraph{Prefix closure.}
The feasible region $\mathcal{F}$ is prefix-closed by construction.
If a prefix violates local admissibility, then any continuation of that prefix remains outside $\mathcal{F}$.
Equivalently, if $\tau\in\mathcal{F}$, then every prefix $\tau_{\le t}$ also lies in $\mathcal{F}$.

\begin{lemma}[Prefix closure under local admissibility]
\label{lem:prefix_closure_app}
Let $\tau=\{(v_t,op_t)\}_{t=1}^T$ and suppose that for some step $t$,
\[
\mathrm{Adm}_{\mathfrak S}\!\bigl(\rho(v_t),op_t,v_t\bigr)=0.
\]
Then every trajectory $\tau'$ that contains the same prefix up to step $t$ satisfies $\tau'\notin\mathcal{F}$.
\end{lemma}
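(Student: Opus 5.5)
The plan is to argue directly from the definition of $\mathcal{F}$ in \Cref{eq:sca_def_app}: membership is a universal statement over all steps of the trajectory. We will therefore exhibit a single step at which the condition fails.

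\textbf{Setting up the shared prefix.} Fix $\tau'=\{(v'_s,op'_s)\}_{s=1}^{T'}$ that agrees with $\tau$ up to step $t$, so $T'\ge t$ and $(v'_s,op'_s)=(v_s,op_s)$ for all $s\le t$. The only point needing care is that $\mathrm{Adm}_{\mathfrak S}$ is evaluated on the triple $(\rho(v_t),op_t,v_t)$, which involves the predecessor context. We will check that $\rho(v'_t)=\rho(v_t)$.

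\textbf{Agreement of the predecessor context.} The predecessor context $\rho(v'_t)$ is determined entirely by the prefix $\tau'_{<t}$: it is either $v'_{t-1}$, or more generally some function of $(v'_1,op'_1,\ldots,v'_{t-1},op'_{t-1})$, with the initial state used when $t=1$. This is exactly the modeling convention of \Cref{sec:lhr_prelim}, where $s_{t+1}=\Phi(s_t,a_t)$ depends only on the past. Since that prefix coincides with $\tau_{<t}$, we get $\rho(v'_t)=\rho(v_t)$. I expect this to be the main (and only) subtlety. It requires stating explicitly that $\rho$ is prefix-determined, so that $\mathrm{Adm}_{\mathfrak S}$ cannot depend on future steps. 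I would make this assumption explicit, since it is built into the definition of a sequential symbolic interface.

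\textbf{Conclusion.} Hence $\mathrm{Adm}_{\mathfrak S}(\rho(v'_t),op'_t,v'_t)=\mathrm{Adm}_{\mathfrak S}(\rho(v_t),op_t,v_t)=0$. The universal condition in \Cref{eq:sca_def_app} therefore fails at step $t$, so $\tau'\notin\mathcal{F}$. The equivalent formulation also follows by contraposition: if $\tau\in\mathcal{F}$, every step of every prefix $\tau_{\le t}$ satisfies admissibility, so $\tau_{\le t}\in\mathcal{F}$. This in turn gives the identity $\mathcal{G}=\{\tau:\exists t,\ \phi(\tau_{\le t})=0\}$ used in \Cref{sec:theory}.
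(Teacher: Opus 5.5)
Your proof is correct and follows the same route as the paper: membership in $\mathcal{F}$ is a conjunction over all steps, and any $\tau'$ sharing the prefix contains the same failed step-$t$ transition, so the conjunction fails. Your explicit check that $\rho(v'_t)=\rho(v_t)$ is what the paper leaves implicit in the phrase ``contains the same failed transition.'' It is immediate if $\rho(v_t)$ is read as a function of the node $v_t$ alone; under your prefix-determined reading it also needs the initial state to be part of the shared prefix when $t=1$.
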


\begin{proof}
By Definition~\eqref{eq:sca_def_app}, membership in $\mathcal{F}$ requires every transition to satisfy local admissibility.
If the step-$t$ transition violates $\mathrm{Adm}_{\mathfrak S}$, then the conjunction in \eqref{eq:sca_def_app} fails.
Any continuation that preserves this invalid prefix also contains the same failed transition, and therefore cannot belong to $\mathcal{F}$.
\end{proof}

\paragraph{Feasible but unsuccessful trajectories.}
Local admissibility does not imply terminal success.
Let $\mathcal{T}^*\subseteq \Omega$ denote the set of globally successful trajectories, such as trajectories that produce a correct final answer or a compiler-checked proof.
We assume $\mathcal{T}^*\subseteq\mathcal{F}$, but $\mathcal{F}$ may contain many feasible-yet-unsuccessful trajectories.
Define
\[
\mathcal{B}:=\mathcal{F}\setminus\mathcal{T}^*.
\]
For a rollout distribution $\pi$ supported primarily on $\mathcal{F}$, define the success density inside the feasible region as
\[
p_{\mathcal{F}} :=
\Pr_{\tau\sim\pi}\!\bigl(\tau\in\mathcal{T}^*\mid \tau\in\mathcal{F}\bigr),
\]
and the feasible-but-failing mass as
\[
\beta_{\mathcal{F}} :=
\Pr_{\tau\sim\pi}\!\bigl(\tau\in\mathcal{B}\mid \tau\in\mathcal{F}\bigr)
=
1-p_{\mathcal{F}}.
\]
In intrinsically complex long-horizon tasks, $\beta_{\mathcal{F}}$ can remain large even when local admissibility is high.
This explains why improving step-level validity alone is insufficient: \model{} must also provide depth-wise structural guidance that helps feasible prefixes extend toward successful trajectories.

\paragraph{Relation to \model{}.}
\model{} implements the requirements identified by SCA through soft training-time guidance.
Rather than imposing a hard projection onto $\mathcal{F}$, \model{} uses structural potentials to bias rollout sampling and reward shaping:
\[
\Psi_{\mathrm{\model{}}}(s_t,a_t)
=
\alpha \Psi_P(s_t,a_t)
+
\gamma \Psi_H(s_t,a_t).
\]
This produces a training-time rollout distribution that increases probability mass on structurally informative trajectories while preserving direct inference with the trained policy.
In domains where an exact local checker is available, hard validity masking can be viewed as a limiting or diagnostic variant, but it is not required by the general \model{} framework and is not used at inference time.

\section{Proof of Theorem~\ref{thm:sca_kl_collapse}}
\label{app:proof_kl_collapse}

We prove the total-variation bound for the KL-regularized optimizer under sparse-reward regimes.

\thmScaKlCollapse*
\begin{proof}
The optimizer of the KL-regularized objective is the exponentially tilted
distribution
\[
\pi^*_{\Omega}(\tau)
=
\frac{
\pi_{\mathrm{ref}}(\tau)\exp(R(\tau)/\lambda)
}{
Z
},
\qquad
Z=
\mathbb{E}_{\tau\sim\pi_{\mathrm{ref}}}
\left[\exp(R(\tau)/\lambda)\right].
\]
Define
\[
h(\tau)=\exp(R(\tau)/\lambda)-1.
\]
Since $R(\tau)=0$ for $\tau\notin S$ and
$R(\tau)\in[0,R_{\max}]$, we have
\[
0\le h(\tau)\le e^{R_{\max}/\lambda}-1,
\qquad
h(\tau)=0\ \text{for}\ \tau\notin S.
\]
Let
\[
H=\mathbb{E}_{\pi_{\mathrm{ref}}}[h(\tau)].
\]
Then
\[
0\le H\le \bigl(e^{R_{\max}/\lambda}-1\bigr)p,
\qquad
Z=1+H.
\]
Therefore,
\[
\pi^*_{\Omega}(\tau)-\pi_{\mathrm{ref}}(\tau)
=
\pi_{\mathrm{ref}}(\tau)
\left(
\frac{1+h(\tau)}{1+H}-1
\right)
=
\pi_{\mathrm{ref}}(\tau)
\frac{h(\tau)-H}{1+H}.
\]
Hence
\[
D_{\mathrm{TV}}(\pi^*_{\Omega},\pi_{\mathrm{ref}})
=
\frac{1}{2}
\mathbb{E}_{\pi_{\mathrm{ref}}}
\left[
\frac{|h(\tau)-H|}{1+H}
\right]
\le
\frac{1}{2(1+H)}
\left(
\mathbb{E}_{\pi_{\mathrm{ref}}}[h(\tau)]
+
H
\right)
=
\frac{H}{1+H}
\le H.
\]
Using the bound on $H$ gives
\[
D_{\mathrm{TV}}(\pi^*_{\Omega},\pi_{\mathrm{ref}})
\le
\bigl(e^{R_{\max}/\lambda}-1\bigr)p.
\]
Finally, when $R_{\max}\ll\lambda$,
\[
e^{R_{\max}/\lambda}-1
=
\frac{R_{\max}}{\lambda}
+
O\!\left(\frac{R_{\max}^2}{\lambda^2}\right),
\]
which gives the stated asymptotic scaling.
\end{proof}

\section{Recovery Guarantee for the Greedy Sparse Locator}
\label{app:gsp}

This appendix provides a standard recovery guarantee for the greedy sparse locator used to instantiate algebraic sparsification in tangent-space coordinates.
The result supports the intuition that, when the residual admits a sparse decomposition over operator-indexed directions, greedy projection can identify structurally relevant operators under standard coherence conditions.

\subsection{Setup}

Let $D=[d_1,\dots,d_N]\in\mathbb{R}^{d\times N}$ be a dictionary with normalized atoms $\|d_j\|_2=1$.
Assume that a residual vector $r\in\mathbb{R}^d$ has a $k$-sparse representation
\[
r = D_I\alpha_I,
\]
where $I\subseteq[N]$ is the support with $|I|=k$.
Define the mutual coherence
\[
\mu := \max_{i\neq j} |\langle d_i,d_j\rangle|.
\]
The greedy sparse locator selects atoms by maximum correlation with the current residual and then orthogonally projects, matching the classical Orthogonal Matching Pursuit (OMP) procedure.

\begin{theorem}[OMP support recovery under coherence~\citep{tropp2007}]
\label{thm:omp_app}
Let $D=[d_1,\ldots,d_N]\in\mathbb{R}^{m\times N}$ be a dictionary with
unit-norm columns, and let
\[
\mu(D)=\max_{p\neq q}|\langle d_p,d_q\rangle|
\]
denote its mutual coherence. Suppose $y=D\alpha$ is noiseless and
$\alpha$ is $k$-sparse with support $I$. If
\[
\mu(D)<\frac{1}{2k-1},
\]
then OMP run for $k$ iterations recovers the exact support $I$.
\end{theorem}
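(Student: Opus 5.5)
The plan is the classical exact-recovery-condition argument of Tropp: show by induction that at every OMP iteration the greedy selection picks an atom from the true support $I$. Once that holds, the $k$ selections are distinct, because orthogonal projection makes the residual orthogonal to all previously chosen atoms. After $k$ steps the selected set is exactly $I$ and the residual is zero.

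\textbf{Inductive invariant.} Suppose the first $s<k$ selections lie in $I$. The current residual is $r_s = y - P_{\Lambda_s} y$, where $\Lambda_s\subseteq I$ is the selected set. Since $y\in\mathrm{span}(D_I)$, we have $r_s = D_I\beta$ for some coefficient vector $\beta$ supported on $I$. Moreover $r_s\neq 0$ whenever $s<k$: the columns of $D_I$ are linearly independent (justified below), so $y$ cannot lie in the span of a proper subset of them unless some $\alpha_i=0$. In that degenerate case the support is smaller and the claim reduces to a smaller $k$.

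It therefore suffices to prove that for any nonzero $r=D_I\beta$,
\[
\max_{j\notin I}|\langle d_j,r\rangle| < \max_{i\in I}|\langle d_i,r\rangle|.
\]

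\textbf{Key estimate.} Let $i^*\in I$ maximize $|\beta_i|$, and write $M=|\beta_{i^*}|$. For $j\notin I$, every term in the expansion of $\langle d_j,r\rangle$ is an off-diagonal inner product, so
\[
|\langle d_j,r\rangle| \le \sum_{i\in I}|\beta_i||\langle d_j,d_i\rangle| \le k\mu M.
\]
For the good side, isolate the diagonal term:
\[
|\langle d_{i^*},r\rangle| \ge |\beta_{i^*}| - \sum_{i\neq i^*}|\beta_i||\langle d_{i^*},d_i\rangle| \ge M\bigl(1-(k-1)\mu\bigr).
\]
The strict inequality we need follows from $k\mu < 1-(k-1)\mu$, which is exactly $\mu<1/(2k-1)$.

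The same diagonal-dominance bound also gives linear independence of $D_I$. The Gram matrix $D_I^\top D_I$ has unit diagonal and off-diagonal row sums at most $(k-1)\mu<1$, so it is invertible by Gershgorin's theorem. This is the fact used above to guarantee $r_s\neq 0$ before step $k$.

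\textbf{Expected obstacle.} The main subtlety is the bookkeeping in the induction, not the inequality itself. I need to check three things carefully: that the residual stays in $\mathrm{span}(D_I)$ after projection; that previously chosen atoms have zero correlation with the residual and so are never reselected; and that ties or zero residual cannot occur prematurely. The coherence calculation is routine once the residual is expressed as $D_I\beta$ and one compares against its largest coefficient, rather than against $\alpha$ directly.
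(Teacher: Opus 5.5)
Your proof is correct, but it takes a different route from the paper's.

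The paper reduces the claim to Tropp's Exact Recovery Condition $\max_{j\notin I}\|D_I^\dagger d_j\|_1<1$. It verifies this condition from coherence using two bounds:
\begin{itemize}
\item a Neumann-series bound $\|(D_I^\top D_I)^{-1}\|_1\le 1/(1-(k-1)\mu)$;
\item the estimate $\|D_I^\top d_j\|_1\le k\mu$.
\end{itemize}
It then cites the ERC theorem as a black box for what OMP actually does at each iteration.

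You instead prove the greedy-selection inequality directly. You write each residual as $D_I\beta$ and compare both sides against the largest coefficient $M=\max_i|\beta_i|$, in the Donoho--Elad style. You also carry out the iteration bookkeeping yourself:
\begin{itemize}
\item the residual stays in $\mathrm{span}(D_I)$;
\item chosen atoms have zero correlation with the residual, while the lower bound $M(1-(k-1)\mu)>0$ holds, so each step selects a new atom of $I$;
\item Gershgorin gives linear independence of $D_I$, so the residual cannot vanish before step $k$.
\end{itemize}
Both routes end at the same threshold $k\mu<1-(k-1)\mu$.

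Your argument is fully self-contained and avoids pseudoinverses and induced matrix norms. The paper's argument isolates a condition that depends only on $(D,I)$. That condition is implied by, but strictly less restrictive than, the coherence bound, and it also governs Basis Pursuit recovery and stability under noise, so it generalizes more readily. The cost is that the paper outsources the analysis of the OMP iterations to the cited theorem, whereas you prove that step explicitly.

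One small point: your degenerate-case remark is unnecessary. Since $I$ is the support of $\alpha$ with $|I|=k$, every $\alpha_i$ with $i\in I$ is nonzero.
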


\begin{proof}
We use the standard exact recovery condition (ERC) for OMP. For a fixed
support $I$, OMP exactly recovers every signal supported on $I$ in the
noiseless setting if
\[
\max_{j\notin I}\|D_I^\dagger d_j\|_1 < 1,
\]
where $D_I$ is the subdictionary indexed by $I$ and
$D_I^\dagger=(D_I^\top D_I)^{-1}D_I^\top$.

It remains to show that the mutual coherence condition implies this ERC.
Let
\[
G_I = D_I^\top D_I.
\]
Since the columns of $D$ are normalized, $G_I$ has diagonal entries equal
to $1$ and off-diagonal entries bounded in absolute value by $\mu(D)$.
Hence
\[
\|I-G_I\|_1 \le (k-1)\mu(D).
\]
Under $\mu(D)<1/(2k-1)$, we have $(k-1)\mu(D)<1$, so $G_I$ is invertible,
and the Neumann-series bound gives
\[
\|G_I^{-1}\|_1
\le
\frac{1}{1-(k-1)\mu(D)}.
\]
For any $j\notin I$,
\[
\|D_I^\top d_j\|_1
\le
k\mu(D).
\]
Therefore,
\[
\|D_I^\dagger d_j\|_1
=
\|(D_I^\top D_I)^{-1}D_I^\top d_j\|_1
\le
\frac{k\mu(D)}{1-(k-1)\mu(D)}.
\]
The condition $\mu(D)<1/(2k-1)$ is equivalent to
\[
\frac{k\mu(D)}{1-(k-1)\mu(D)}<1.
\]
Thus the ERC holds. By the standard OMP exact recovery theorem
\citep{tropp2007}, OMP selects atoms from the true support at every
iteration and recovers $I$ after $k$ iterations.
\end{proof}

The theorem is not a guarantee that \model{} globally solves the reasoning task.
It only justifies the algebraic sparsification step under a standard sparse-residual model:
when the unresolved residual is concentrated on a small number of operator-aligned directions, greedy projection can identify the relevant structural directions.
This supports using $\Psi_P$ as a soft compatibility score for feasible-support concentration.

\section{Soft Feasible-Support Concentration}
\label{app:sage_soft}

This section analyzes an idealized trajectory-level reweighting induced by
\model{}. The result complements Proposition~\ref{prop:\model{}_feasible_concentration} in the main paper and
formalizes how a structural potential suppresses locally inadmissible rollout
mass when it separates feasible and infeasible trajectories. This is a
conditional concentration result, not a universal guarantee.

\paragraph{Trajectory-level reweighting.}
Let $\mathcal{T}$ denote the discrete set of trajectories and let
$\pi_0(\tau\mid q)$ be a base rollout distribution. We consider the
trajectory-level reweighted distribution
\begin{equation}
\pi_\lambda(\tau\mid q)
=
\frac{
\pi_0(\tau\mid q)
\exp\!\bigl(\lambda \Psi(\tau,q)\bigr)
}{
Z_\lambda(q)
},
\qquad
Z_\lambda(q)
=
\sum_{\tau\in\mathcal{T}}
\pi_0(\tau\mid q)
\exp\!\bigl(\lambda \Psi(\tau,q)\bigr),
\label{eq:sage_reweight_app}
\end{equation}
where $\Psi(\tau,q)$ denotes the trajectory-level \model{} potential and
$\lambda\ge 0$ controls guidance strength.

\paragraph{Feasible and infeasible sets.}
Let
\[
\mathcal{T}_{\mathrm{bad}}(q)
:=
\{\tau\in\mathcal{T}:\tau\notin\mathcal{F}(q)\},
\qquad
\mathcal{T}_{\mathrm{good}}(q)
:=
\mathcal{T}\setminus\mathcal{T}_{\mathrm{bad}}(q).
\]
Define the base invalid mass as
\[
p_0(q)
:=
\Pr_{\tau\sim\pi_0}
\bigl(\tau\in\mathcal{T}_{\mathrm{bad}}(q)\bigr).
\]

\paragraph{Relative margin condition.}
Assume that the structural potential separates feasible and infeasible
trajectories by a relative gap:
\begin{equation}
\inf_{\tau\in\mathcal T_{\mathrm{good}}(q)}
\Psi(\tau,q)
-
\sup_{\tau\in\mathcal T_{\mathrm{bad}}(q)}
\Psi(\tau,q)
\ge m.
\label{eq:sage_relative_margin_app}
\end{equation}

\begin{proposition}[Soft suppression under a relative structural margin]
\label{prop:sage_soft_suppression_app}
Under \Cref{eq:sage_reweight_app,eq:sage_relative_margin_app},
\[
\Pr_{\tau\sim\pi_\lambda}
\bigl(\tau\in\mathcal T_{\mathrm{bad}}(q)\bigr)
\le
\frac{
p_0(q)e^{-\lambda m}
}{
1-p_0(q)+p_0(q)e^{-\lambda m}
}.
\]
\end{proposition}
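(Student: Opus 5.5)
The plan is to compute the reweighted masses of $\mathcal T_{\mathrm{bad}}(q)$ and $\mathcal T_{\mathrm{good}}(q)$ directly from \Cref{eq:sage_reweight_app}, bound their ratio using the margin \Cref{eq:sage_relative_margin_app}, and then convert the ratio bound into a bound on the bad mass alone. Write $s:=\sup_{\tau\in\mathcal T_{\mathrm{bad}}}\Psi(\tau,q)$ and $i:=\inf_{\tau\in\mathcal T_{\mathrm{good}}}\Psi(\tau,q)$, so that $i-s\ge m$. With $\lambda\ge 0$, every bad trajectory has weight at most $\pi_0(\tau\mid q)e^{\lambda s}$, and every good one has weight at least $\pi_0(\tau\mid q)e^{\lambda i}$. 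Summing over each set gives
\[
W_{\mathrm{bad}}:=\sum_{\tau\in\mathcal T_{\mathrm{bad}}}\pi_0(\tau\mid q)e^{\lambda\Psi(\tau,q)}\le p_0 e^{\lambda s},
\qquad
W_{\mathrm{good}}\ge (1-p_0)e^{\lambda i}.
\]
Both sets partition $\mathcal T$, so $Z_\lambda=W_{\mathrm{bad}}+W_{\mathrm{good}}$.

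Next, write $\Pr_{\pi_\lambda}(\mathcal T_{\mathrm{bad}})=W_{\mathrm{bad}}/(W_{\mathrm{bad}}+W_{\mathrm{good}})$. The function $x\mapsto x/(x+y)$ is increasing in $x$ and decreasing in $y$ for $x,y\ge0$. Substituting the upper bound for $W_{\mathrm{bad}}$ and the lower bound for $W_{\mathrm{good}}$ therefore gives
\[
\Pr_{\pi_\lambda}(\mathcal T_{\mathrm{bad}})\le \frac{p_0e^{\lambda s}}{p_0e^{\lambda s}+(1-p_0)e^{\lambda i}}
=\frac{p_0e^{-\lambda(i-s)}}{1-p_0+p_0e^{-\lambda(i-s)}}.
\]
The right-hand side is increasing in $e^{-\lambda(i-s)}$, because $u\mapsto p_0u/(1-p_0+p_0u)$ is increasing for $u\ge0$. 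Since $i-s\ge m$ and $\lambda\ge 0$, we have $e^{-\lambda(i-s)}\le e^{-\lambda m}$, and the claimed bound follows. As a byproduct, the same two sums give Proposition~\ref{prop:\model{}_feasible_concentration} directly, since $W_{\mathrm{bad}}/W_{\mathrm{good}}\le e^{-\lambda m}p_0/(1-p_0)$.

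The main obstacle is bookkeeping rather than substance: the two monotonicity steps must be applied in the correct direction. A few edge cases need separate handling. If $p_0=1$, there are no feasible trajectories and the margin condition is vacuous or ill-posed; in that case the bound reads $\le1$ and holds trivially. If $p_0=0$, both sides are $0$. If $\mathcal T$ is infinite, we need $Z_\lambda<\infty$; this holds, for instance, when $\Psi$ is bounded, which is the case for $\Psi_{\model}$ since it is a finite sum of terms in $[0,1]$. Finally, the supremum and infimum need not be attained, but only the inequalities $\Psi\le s$ on bad trajectories and $\Psi\ge i$ on good ones are used, so this causes no difficulty.
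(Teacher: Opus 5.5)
Your proposal is correct and follows essentially the same route as the paper: bound the reweighted bad mass above by $p_0$ times the largest bad weight and the good mass below by $(1-p_0)$ times the smallest good weight, then combine via the monotonicity of $x/(x+y)$. The only difference is cosmetic: the paper first shifts $\Psi$ by $\sup_{\mathcal T_{\mathrm{bad}}}\Psi$, which leaves $\pi_\lambda$ unchanged, so the margin $m$ enters directly and your final monotonicity step in $e^{-\lambda(i-s)}$ is not needed.
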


\begin{proof}
Let
\[
b(q)
=
\sup_{\tau\in\mathcal T_{\mathrm{bad}}(q)}
\Psi(\tau,q),
\qquad
\widetilde{\Psi}(\tau,q)
=
\Psi(\tau,q)-b(q).
\]
This additive shift does not change $\pi_\lambda$, because the factor
$\exp(-\lambda b(q))$ cancels between the numerator and the normalizing
constant. By the relative margin assumption,
\[
\widetilde{\Psi}(\tau,q)\le 0
\quad
\text{for } \tau\in\mathcal T_{\mathrm{bad}}(q),
\]
and
\[
\widetilde{\Psi}(\tau,q)\ge m
\quad
\text{for } \tau\in\mathcal T_{\mathrm{good}}(q).
\]
Therefore,
\[
\sum_{\tau\in\mathcal T_{\mathrm{bad}}(q)}
\pi_0(\tau\mid q)
\exp(\lambda\widetilde{\Psi}(\tau,q))
\le
p_0(q),
\]
while
\[
\sum_{\tau\in\mathcal T_{\mathrm{good}}(q)}
\pi_0(\tau\mid q)
\exp(\lambda\widetilde{\Psi}(\tau,q))
\ge
(1-p_0(q))e^{\lambda m}.
\]
Hence
\[
\Pr_{\pi_\lambda}
\bigl(\mathcal T_{\mathrm{bad}}(q)\mid q\bigr)
\le
\frac{
p_0(q)
}{
p_0(q)+(1-p_0(q))e^{\lambda m}
}.
\]
Multiplying the numerator and denominator by $e^{-\lambda m}$ gives
\[
\Pr_{\pi_\lambda}
\bigl(\mathcal T_{\mathrm{bad}}(q)\mid q\bigr)
\le
\frac{
p_0(q)e^{-\lambda m}
}{
1-p_0(q)+p_0(q)e^{-\lambda m}
}.
\]
\end{proof}

The bound shows that the infeasible mass is suppressed exponentially in the
guidance strength $\lambda$ when the structural potential separates feasible
and infeasible trajectories by a relative margin. The result does not require
bad trajectories to receive negative potential values; it is invariant to
additive shifts of $\Psi$.

\section{Experimental Details}
\label{app:exp}

This appendix provides additional implementation details for \model{}. We describe the
task-specific instantiation of the structural potentials, the rollout sampling and reward
construction used during post-training, and the auxiliary probes used to construct structural
signals in non-symbolic tasks. All structural modules described below are used only during
post-training. At inference time, we use the trained policy directly without evaluating
$\Psi_P$, $\Psi_H$, residual probes, semantic clusters, or local checkers.

\subsection{Task-Specific Instantiation}
\label{app:task_instantiation}

\model{} instantiates the two structural requirements identified by SCA: feasible-support
concentration for mitigating exploration bias and prefix-level structural correction for
mitigating compounding bias. The concrete implementation depends on the task interface.

\paragraph{AC Task.}
The AC task setting provides an explicit local admissibility interface. The state
$s_t$ is the current group presentation after the generated prefix, and the action $a_t$ is an
AC move. The residual $r_t$ represents unresolved algebraic structure in the
current presentation, including generator counts, relator lengths, and unresolved relator
components. The target anchor $g$ is the task-specified trivial presentation. In this setting,
algebraic sparsification scores the compatibility between a candidate move and the unresolved
algebraic residual, while hyperbolic structural guidance measures progress toward the target
presentation in a geometry suited to tree-like long-horizon transformations. This is the domain
where SCA gives an exact symbolic instantiation through the local admissibility interface.

\paragraph{Closed-form mathematical reasoning.}
For mathematical reasoning, we do not claim an exact SCA guarantee. Instead, \model{} uses the
SCA requirements as design principles for learned training-time structural priors. The state is
the current solution prefix concatenated with the original problem. The residual $r_t$ is estimated
from unresolved symbolic and semantic constraints extracted from the problem statement and
the current prefix. These constraints include problem entities, variables, equation structure,
operation category, and answer-schema status. The target anchor $g$ is prompt-conditioned and
derived from the expected constraint structure of the task, not from gold solution traces, gold
rationales, test labels, or terminal correctness labels.

\paragraph{Free-form natural reasoning.}
For free-form reasoning, the state is the partial rationale or response prefix concatenated with
the original prompt. The residual $r_t$ captures unresolved prompt requirements and prefix-level
semantic structure. The target anchor $g$ is a prompt-conditioned semantic anchor estimated from
training-rollout representations. It is not derived from test labels, gold answers, gold rationales,
or ground-truth solution traces. We instantiate SCA's structural principles via learned proxies; their fidelity to the symbolic guarantees is empirical.

For non-symbolic reasoning tasks, SAGE does not normalize over the full
token vocabulary or the full space of textual continuations. During
post-training, each action is a candidate reasoning step proposed by the
old policy. We sample a finite candidate set and reweight candidates using
length-normalized policy log-probability and structural potentials. This
finite-candidate procedure is used only for rollout generation during
training. At inference time, all models decode directly from the trained
policy without structural scoring or candidate reweighting.

To isolate the effect of structural guidance from prompt-selection effects,
all ablations and dense-reward comparisons use a fixed entropy-filtered
training subset constructed once from reference-policy rollouts. GRPO, EMPO,
PRM-GRPO, and SAGE are trained on the same retained prompts under the same
rollout budgets and optimization schedule. All test results are computed on
the complete benchmark test sets.

\subsection{Training-Time Rollout Sampling and Reward Construction}
\label{app:rollout_reward}

For symbolic Andrews-Curtis reasoning, an action $a_t$ is a discrete
Andrews-Curtis move provided by the task interface. For closed-form
mathematical reasoning and free-form natural reasoning, an action $a_t$
denotes a coarse candidate reasoning step rather than a single token. In
practice, a candidate step is generated by $\pi_{\theta_{\mathrm{old}}}$
until a step delimiter, sentence boundary, final-answer marker, end-of-sequence
token, or a maximum step length $L_{\mathrm{step}}$ is reached.

At each state $s_t$, we first sample a finite candidate set
\[
\mathcal{C}_t
=
\{a_t^{(1)},\ldots,a_t^{(K)}\}
\]
from $\pi_{\theta_{\mathrm{old}}}(\cdot\mid s_t)$. We then evaluate the
structural potentials only on this finite candidate set and sample the next
step according to
\[
P_{\mathrm{sample}}(a_t^{(k)}\mid s_t,\mathcal{C}_t)
=
\frac{
\exp\!\left(
\bar{\ell}_{\theta_{\mathrm{old}}}(a_t^{(k)}\mid s_t)
+
\lambda \Psi_{\model{}}(s_t,a_t^{(k)})
\right)
}{
\sum_{k'=1}^{K}
\exp\!\left(
\bar{\ell}_{\theta_{\mathrm{old}}}(a_t^{(k')}\mid s_t)
+
\lambda \Psi_{\model{}}(s_t,a_t^{(k')})
\right)
},
\]
where
\[
\bar{\ell}_{\theta_{\mathrm{old}}}(a_t^{(k)}\mid s_t)
=
\frac{1}{|a_t^{(k)}|}
\sum_{u=1}^{|a_t^{(k)}|}
\log \pi_{\theta_{\mathrm{old}}}
\bigl(a_{t,u}^{(k)}\mid s_t,a_{t,<u}^{(k)}\bigr)
\]
is the length-normalized log-probability of the candidate step. Length
normalization prevents the finite-candidate reweighting rule from favoring
shorter steps solely because of token-product probability effects.

The SAGE potential is
\[
\Psi_{\model{}}(s_t,a_t)
=
\alpha \Psi_P(s_t,a_t)
+
\gamma \Psi_H(s_t,a_t).
\]
This finite-candidate sampler is used only during post-training rollout
generation. It is not an exact normalization over the full space of textual
actions or the full token vocabulary. We use the AdamW optimizer~\citep{loshchilov2017decoupled} if applicable.  At inference time, we use the trained
policy directly without candidate-step reweighting or structural scoring.

\subsection{State Encoder and Hyperbolic Embedding}
\label{app:hyperbolic_embedding}

For each intermediate state $s_t$, we first construct a canonical textual representation $x(s_t)$.
In AC task, $x(s_t)$ is the canonicalized group presentation after applying
the generated prefix up to step $t$. In mathematical and free-form reasoning tasks, $x(s_t)$ is
the generated reasoning prefix concatenated with the original prompt.

Let $\mathcal{E}$ denote the frozen reference-model encoder used to featurize intermediate states.
Unless otherwise stated, we use the hidden state from layer $\ell_{\mathrm{enc}}$ at the final generated
token:
\[
h_t
=
\mathcal{E}_{\ell_{\mathrm{enc}}}(x(s_t))
\in \mathbb{R}^{d_h}.
\]
The vector $h_t$ is projected into a lower-dimensional structural representation by a fixed linear
map $W_E\in\mathbb{R}^{d_E\times d_h}$ fitted only on training-rollout states:
\[
z_t = W_E h_t.
\]
In our implementation, $W_E$ is fitted without correctness supervision using training-rollout
representations only. It is fixed before guided rollout generation and is not updated during policy
optimization. We then map $z_t$ into the Poincar\'e ball using radial projection:
\[
E(s_t)
=
\frac{\tanh(\sqrt{c}\|z_t\|_2)}
{\sqrt{c}\|z_t\|_2}
z_t,
\]
where $c>0$ is the curvature parameter. The same encoder and projection are used for the task
anchor $g$:
\[
E(g)
=
\frac{\tanh(\sqrt{c}\|W_E h_g\|_2)}
{\sqrt{c}\|W_E h_g\|_2}
W_E h_g.
\]
For AC task, $g$ is the canonical trivial presentation. For mathematical and free-form
tasks, $g$ is a prompt-conditioned structural anchor derived from the input format and training-rollout
representations, not from test labels, gold rationales, or terminal correctness labels.

The hyperbolic structural potential is
\[
\Psi_H(s_t,a_t,g)
=
\exp\!\left(
-\frac{
d_{\mathbb{D}_c}\!\left(E(s_t\circ a_t),E(g)\right)
}{\kappa}
\right),
\]
where $d_{\mathbb{D}_c}$ is the Poincar\'e distance with curvature $c$, and $\kappa>0$ controls
the sharpness of the guidance signal.

\subsection{Residual Representation and Algebraic Sparsification}
\label{app:residual_probe}

For symbolic domains, the residual $r_t$ is computed from the unresolved
symbolic structure of the current state. In Andrews-Curtis-style tasks,
$r_t$ is derived from the canonicalized presentation after the generated
prefix, including generator counts, relator lengths, and unresolved relator
components. Each candidate move $L_j$ is associated with a structural
subspace $S_j$, and the algebraic sparsification potential is
\[
\Psi_P(r_t,S_j)
=
\frac{\|P_{S_j}r_t\|_2^2}{\|r_t\|_2^2+\epsilon}.
\]

For non-symbolic tasks, we use a finite operator taxonomy over coarse
reasoning-step types. For mathematical reasoning, the operator taxonomy
includes simplification, substitution, numerical evaluation, equation
formation, formula invocation, case split, constraint checking, and
final-answer extraction. For free-form natural reasoning, the taxonomy
includes factual retrieval, comparison, elimination, aggregation, inference,
format normalization, and final-answer commitment.

Given a rollout prefix $s_t$, we compute a frozen encoder representation
\[
h_t=\mathcal{E}_{\ell_{\mathrm{enc}}}(x(s_t)).
\]
The residual vector is predicted by a fixed probe
\[
r_t=W_Rh_t.
\]
The probe is trained only on training-rollout pseudo-labels. Let
$y_t$ denote the rollout-local structural pseudo-label vector, containing
operator type, unresolved constraint coverage, answer-schema status, and
format-consistency indicators. We fit $W_R$ by the regularized regression
objective
\[
W_R
=
\arg\min_W
\sum_{(s_t,y_t)\in\mathcal{D}_{\mathrm{probe}}}
\|Wh_t-y_t\|_2^2
+
\xi\|W\|_F^2 .
\]
No gold answers, gold rationales, ground-truth solution traces, terminal
rewards, test labels, or test-set information are used to train this probe.

For each operator type $j$, we construct a subspace $S_j$ from training
rollouts. Let
\[
\mathcal{R}_j
=
\{r_t:\text{the transition from }s_t\text{ is assigned operator type }j\}
\]
be the residual vectors associated with operator type $j$. We compute the
top $d_j$ principal directions of $\mathcal{R}_j$ and write them as
\[
U_j\in\mathbb{R}^{d_R\times d_j}.
\]
The projection matrix is then
\[
P_{S_j}=U_jU_j^\top .
\]
For a candidate reasoning step $a_t^{(k)}$, we assign a coarse operator
type
\[
j(a_t^{(k)})=C_{\mathrm{op}}(s_t,a_t^{(k)}),
\]
where $C_{\mathrm{op}}$ is the same parser or cluster-based operator
classifier used to construct the pseudo-labels. The algebraic
sparsification score for the candidate step is
\[
\Psi_P(s_t,a_t^{(k)})
=
\frac{
\|P_{S_{j(a_t^{(k)})}}r_t\|_2^2
}{
\|r_t\|_2^2+\epsilon
}.
\]
Thus, $\Psi_P$ measures whether the candidate step's coarse operator type
acts on the unresolved structural residual predicted for the current prefix.
For non-symbolic tasks, this score is a learned training-time structural
prior, not a formal local-admissibility certificate.

\subsection{Meaning Clustering and Fixed-Subset Entropy Filtering}
\label{app:entropy_filtering}

Entropy filtering is used only to define a fixed training subset and is not
used during test-time evaluation. To avoid confounding method performance
with method-dependent prompt selection, we compute the entropy filter once
using rollouts from the same reference policy $\pi_{\mathrm{ref}}$ before
training any compared method.

For each training prompt $q$, we sample $G$ reference rollouts and group the
outputs into meaning clusters $\{c_1,\ldots,c_M\}$. In structured domains,
clustering is implemented by extracting and canonicalizing the final answer
followed by deterministic matching. In free-form domains, we use a binary
semantic-equivalence function
\[
V(q,o_a,o_b)\in\{0,1\}
\]
that returns whether two outputs express the same answer meaning. This
function is applied only to training rollouts for constructing the training
subset.

We compute semantic entropy as
\[
H_{\mathrm{ref}}(q)
=
-\sum_{j=1}^M p(c_j\mid q)\log p(c_j\mid q),
\qquad
p(c_j\mid q)=\frac{|c_j|}{G}.
\]
The fixed filtered training subset is
\[
\mathcal D_{\mathrm{filt}}
=
\{q\in\mathcal D_{\mathrm{train}}:
\delta_{\mathrm{low}}<H_{\mathrm{ref}}(q)<\delta_{\mathrm{high}}\}.
\]
All compared methods in the controlled ablation study, including GRPO,
EMPO, PRM-GRPO, and \model{}, are trained on the same
$\mathcal D_{\mathrm{filt}}$ with the same number of prompts, rollout groups,
optimization steps, decoding constraints, and KL coefficient. Therefore,
differences in performance cannot be attributed to method-specific prompt
retention.

All reported evaluation results are computed on the full benchmark test
sets without filtering test examples, without semantic clustering, and
without evaluating $\Psi_P$ or $\Psi_H$ at inference time.

\section{Hyperparameter Ablation}
\label{app:hyperparameter_ablation}

We evaluate the robustness of \model{} on the Olympiad dataset under three sources of variation:
Pass@$K$ scaling, decoding temperature, and the KL regularization coefficient $\beta_{\mathrm{KL}}$.
As shown in Figure~\ref{fig:ablation}, \model{} maintains a consistent performance lead over GRPO across sampling budgets.
At Pass@64, \model{} reaches approximately $87\%$, compared with approximately $84\%$ for GRPO.

Lower decoding temperatures generally favor more deterministic reasoning.
However, \model{} remains competitive under higher stochasticity, indicating that structural post-training improves the quality of the sampled reasoning distribution rather than merely exploiting a narrow decoding regime.
We also observe that \model{} is less sensitive to the KL coefficient than GRPO.
While GRPO performance varies substantially across $\beta_{\mathrm{KL}}$, \model{} with $\beta_{\mathrm{KL}}=10^{-3}$ outperforms the strongest GRPO variants in this sweep.

\begin{figure*}[h!]
    \centering
    \includegraphics[width=\textwidth]{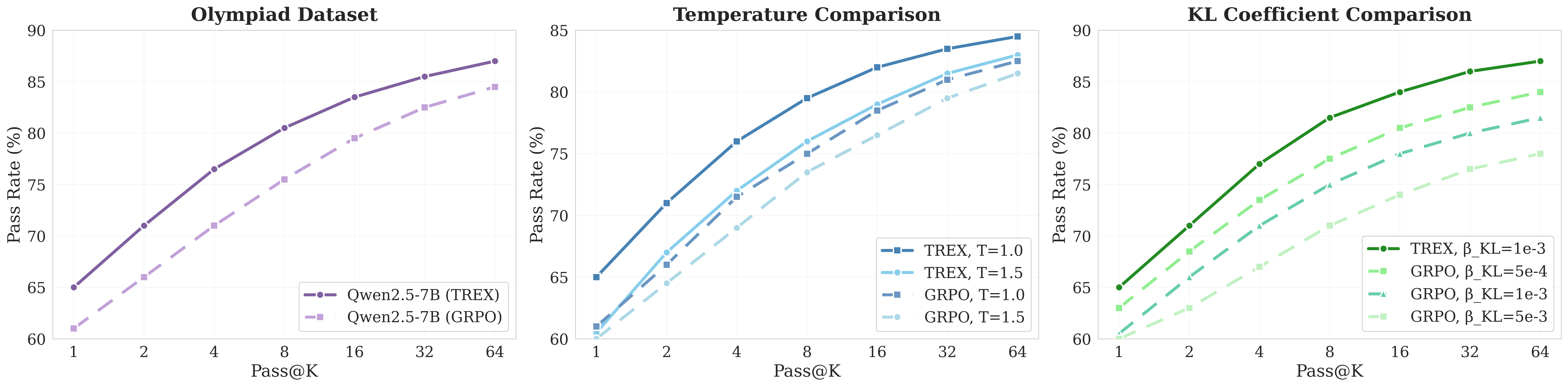}
    \caption{
    Hyperparameter ablation on the Olympiad dataset.
    Left: \model{} scales consistently across Pass@$K$ budgets.
    Middle: \model{} remains robust under different decoding temperatures.
    Right: \model{} is less sensitive to the KL coefficient $\beta_{\mathrm{KL}}$ than GRPO.
    }
    \label{fig:ablation}
    \vspace{-1em}
\end{figure*}

\section{Process Reward Model Baseline}
\label{app:prm_baseline}

To further compare \model{} against dense process-level reward shaping, we include a
process-reward-model baseline, denoted PRM-GRPO. This baseline addresses whether the
improvements of \model{} can be explained simply by adding a learned dense reward signal during
post-training.

\paragraph{PRM construction.}
PRM-GRPO trains a step-level reward model $R_{\phi}(s_t,a_t)$ on the same training rollouts used
by the RL baselines. For mathematical and free-form reasoning tasks, process labels are derived
from terminal-outcome-labeled rollouts through outcome-to-prefix credit assignment: prefixes from
successful rollouts are treated as positive process examples, while prefixes from failed rollouts are
treated as negative process examples. For AC task, we additionally use the
task interface to derive local symbolic-validity targets for generated transitions. The PRM is trained
only on training rollouts and does not use test-set labels or test-time feedback.

\paragraph{PRM-guided post-training.}
During post-training, PRM-GRPO uses the same rollout budget, decoding constraints, KL coefficient,
and optimization schedule as GRPO and \model{}. The only difference from GRPO is that the sparse
terminal reward is augmented with the learned process reward:
\[
\widetilde{R}_{\mathrm{PRM}}(\tau)
=
R_{\mathrm{term}}(\tau)
+
\eta_{\mathrm{PRM}}
\frac{1}{T}
\sum_{t=1}^{T}
R_{\phi}(s_t,a_t).
\]
The resulting trajectory-level reward is then normalized within each rollout group and optimized
with the same group-relative policy update. The PRM is used only during post-training reward
shaping and is not used to filter test examples or guide inference.

\section{Additional Results}
\label{app:addition}

\begin{table}[h!]
\centering
\setlength{\tabcolsep}{3pt}
\caption{
Accuracy (\%) on free-form natural reasoning benchmarks over 5 random seeds, reported as mean with {\tiny standard deviation}.
The best mean is in \textbf{bold} with second best in \underline{underline}.
}
\label{tab:natural_reasoning_seed}
\resizebox{\textwidth}{!}{
\begin{tabular}{l c c c c c c c c}
\toprule
\textbf{Model} & \textbf{STEM}
& \multicolumn{4}{c}{\textbf{MMLU-Pro}} 
& \textbf{GPQA} 
& \textbf{BBH-H} 
& \textbf{ARC-C} \\
\cmidrule(lr){3-6}
 &  &  \footnotesize{Humanity} & \footnotesize{Social} & Other & Avg. &  &  &  \\
\midrule
\multicolumn{9}{l}{\emph{9B models}} \\
Qwen3.5            
& \meanstd{12.71}{0.21} & \meanstd{8.02}{0.18}  & \meanstd{14.95}{0.25} & \meanstd{10.21}{0.20} & \meanstd{11.06}{0.16} & \meanstd{10.91}{0.31} & \meanstd{21.58}{0.38} & \meanstd{18.49}{0.35} \\
Qwen3.5 w/SFT      
& \meanstd{20.18}{0.34} & \meanstd{11.36}{0.27} & \meanstd{28.97}{0.43} & \meanstd{19.22}{0.36} & \meanstd{19.85}{0.31} & \meanstd{12.31}{0.42} & \meanstd{32.44}{0.55} & \meanstd{29.56}{0.51} \\
Qwen3.5 w/GRPO     
& \underline{\meanstd{33.38}{0.56}} & \underline{\meanstd{28.31}{0.49}} & \underline{\meanstd{50.41}{0.62}} & \underline{\meanstd{39.26}{0.55}} & \underline{\meanstd{39.33}{0.47}} & \meanstd{18.21}{0.73} & \meanstd{41.69}{0.84} & \meanstd{37.82}{0.79} \\
Qwen3.5 w/EMPO     
& \meanstd{32.57}{0.53} & \meanstd{27.32}{0.51} & \meanstd{48.73}{0.66} & \meanstd{37.69}{0.58} & \meanstd{37.91}{0.50} & \textbf{\meanstd{21.11}{0.69}} & \underline{\meanstd{44.04}{0.82}} & \underline{\meanstd{39.73}{0.76}} \\
\textbf{Qwen3.5 w/\model} 
& \textbf{\meanstd{34.11}{0.48}} & \textbf{\meanstd{29.08}{0.44}} & \textbf{\meanstd{51.17}{0.59}} & \textbf{\meanstd{39.71}{0.52}} & \textbf{\meanstd{39.99}{0.43}} & \underline{\meanstd{20.86}{0.64}} & \textbf{\meanstd{45.31}{0.71}} & \textbf{\meanstd{42.04}{0.68}} \\
\midrule
\multicolumn{9}{l}{\emph{27B models}} \\
Qwen3.6            
& \meanstd{30.29}{0.25} & \meanstd{24.34}{0.22} & \meanstd{46.55}{0.34} & \meanstd{35.31}{0.29} & \meanstd{35.40}{0.24} & \meanstd{16.09}{0.38} & \meanstd{38.70}{0.49} & \meanstd{34.78}{0.46} \\
Qwen3.6 w/SFT      
& \meanstd{34.18}{0.31} & \meanstd{28.51}{0.27} & \meanstd{41.52}{0.39} & \meanstd{37.28}{0.33} & \meanstd{35.77}{0.29} & \meanstd{22.67}{0.47} & \meanstd{45.27}{0.57} & \meanstd{41.12}{0.53} \\
Qwen3.6 w/GRPO     
& \textbf{\meanstd{57.74}{0.49}} & \underline{\meanstd{37.0}{0.43}} & \textbf{\meanstd{65.16}{0.54}} & \underline{\meanstd{57.55}{0.48}} & \underline{\meanstd{53.24}{0.41}} & \textbf{\meanstd{34.29}{0.62}} & \meanstd{55.74}{0.71} & \meanstd{51.78}{0.67} \\
Qwen3.6 w/EMPO     
& \meanstd{53.96}{0.51} & \meanstd{35.58}{0.45} & \meanstd{60.04}{0.58} & \meanstd{52.18}{0.50} & \meanstd{49.27}{0.44} & \meanstd{29.52}{0.66} & \underline{\meanstd{57.19}{0.69}} & \underline{\meanstd{53.26}{0.63}} \\
\textbf{Qwen3.6 w/\model} 
& \underline{\meanstd{56.31}{0.44}} & \textbf{\meanstd{37.91}{0.39}} & \underline{\meanstd{64.43}{0.51}} & \textbf{\meanstd{58.25}{0.43}} & \textbf{\meanstd{53.53}{0.37}} & \underline{\meanstd{32.51}{0.57}} & \textbf{\meanstd{60.25}{0.62}} & \textbf{\meanstd{57.11}{0.58}} \\
\midrule
\multicolumn{9}{l}{\emph{35B models}} \\
Qwen3.5              
& \meanstd{45.08}{0.18} & \meanstd{36.31}{0.17} & \meanstd{52.29}{0.25} & \meanstd{44.28}{0.21} & \meanstd{44.29}{0.18} & \meanstd{31.05}{0.34} & \meanstd{47.52}{0.41} & \meanstd{45.7}{0.39} \\
Qwen3.5 w/SFT        
& \meanstd{49.84}{0.23} & \meanstd{38.26}{0.20} & \meanstd{54.47}{0.31} & \meanstd{48.62}{0.26} & \meanstd{47.12}{0.22} & \meanstd{28.97}{0.39} & \meanstd{52.96}{0.46} & \meanstd{49.18}{0.43} \\
Qwen3.5 w/GRPO       
& \underline{\meanstd{63.58}{0.38}} & \underline{\meanstd{43.19}{0.34}} & \underline{\meanstd{69.08}{0.45}} & \underline{\meanstd{60.71}{0.39}} & \underline{\meanstd{57.66}{0.33}} & \underline{\meanstd{35.78}{0.51}} & \meanstd{63.29}{0.57} & \meanstd{60.91}{0.55} \\
Qwen3.5 w/EMPO       
& \meanstd{61.96}{0.41} & \meanstd{42.02}{0.36} & \meanstd{68.61}{0.47} & \meanstd{59.54}{0.42} & \meanstd{56.72}{0.35} & \meanstd{35.43}{0.54} & \underline{\meanstd{65.02}{0.55}} & \underline{\meanstd{62.97}{0.51}} \\
\textbf{Qwen3.5 w/\model} 
& \textbf{\meanstd{65.27}{0.33}} & \textbf{\meanstd{44.51}{0.30}} & \textbf{\meanstd{71.22}{0.40}} & \textbf{\meanstd{62.25}{0.35}} & \textbf{\meanstd{59.33}{0.29}} & \textbf{\meanstd{38.58}{0.46}} & \textbf{\meanstd{69.07}{0.49}} & \textbf{\meanstd{66.41}{0.47}} \\
\bottomrule
\end{tabular}
}
\end{table}

\section{Computation Resources}
\label{sec:compute}

All experiments were conducted on an internal GPU cluster with 4 NVIDIA A100 GPUs and 2 NVIDIA H200 GPUs. We did not train any foundation model from scratch; compute was mainly used for post-training rollout generation, structural-potential computation, policy optimization, ablations, and benchmark evaluation. SAGE adds training-time overhead for computing algebraic sparsification and hyperbolic structural guidance, but incurs no extra inference-time cost because the guidance is absorbed into the trained policy.

\section{Limitations}
\label{sec:limit}
SAGE relies on training-time structural priors whose quality depends on the
task interface. In explicit symbolic domains, local admissibility can be
defined exactly, while in mathematical and free-form reasoning tasks the
residuals, anchors, and operator subspaces are approximate learned proxies.
The theoretical concentration results therefore apply directly to symbolic
settings and conditionally to settings where the learned potentials separate
productive and unproductive prefixes. SAGE also introduces additional
training-time overhead from candidate-step scoring, residual probing, and
hyperbolic distance computation, although inference uses the trained policy
directly without structural scoring. Future work should study more automatic
construction of structural priors and tighter guarantees for non-symbolic
reasoning.

\section{Broader Impact}
\label{sec:broader_impact}

This work aims to improve long-horizon reasoning under sparse-reward regimes. Its potential positive impact is to make LLM reasoning more reliable in domains requiring extended symbolic or semi-symbolic reasoning, such as mathematics, formal verification, and scientific reasoning, while reducing dependence on dense human-written process supervision.

The main risk is that stronger long-horizon reasoning may also improve dual-use capabilities when integrated into external tools or autonomous systems. SAGE improves training-time reasoning stability, but it does not guarantee factual correctness, harmlessness, fairness, or robustness under distribution shift. Deployments in consequential settings should therefore include domain-specific validation, uncertainty estimation, human oversight, and task-level safety checks.

\section{Safeguards}
\label{sec:safeguards}

This work does not release a new pretrained foundation model, user-facing agent, scraped dataset, or deployment system. The released artifacts are limited to code, training and evaluation scripts, and reproducible reasoning resources. The experiments use public benchmarks and synthetic or symbolic reasoning tasks, without private user data or human-subject data collection.

SAGE is intended as a research framework, not a deployment-ready safety mechanism. Future extensions involving external tools, web access, code execution, or autonomous planning should add safeguards such as sandboxing, rate limits, misuse monitoring, restricted access for high-risk capabilities, and domain-specific safety evaluation before release.

\section{LLM usage}
\label{sec:llm}
Large language models were used only for language polishing and wording refinement. 
Specifically, LLM assistance was limited to improving grammar, clarity, conciseness, and presentation of author-written text. 
All scientific ideas, problem formulation, theoretical results, proofs, algorithms, experimental design, implementation, data processing, numerical results, analysis, and conclusions were produced and verified by the authors. 
LLMs were not used to generate experimental results, fabricate data, perform reviewer simulation for reported claims, or make autonomous scientific decisions. 
The authors take full responsibility for the final content of the paper.


\newpage
\section*{NeurIPS Paper Checklist}
\begin{enumerate}

\item {\bf Claims}
    \item[] Question: Do the main claims made in the abstract and introduction accurately reflect the paper's contributions and scope?
    \item[] Answer: \answerYes{} 
    \item[] Justification: The abstract and introduction state the main theoretical, methodological, and empirical claims, including SCA as a theoretical lens, SAGE as the proposed framework, and evaluation across 13 benchmarks and 8 model backbones. The claims are scoped to long-horizon reasoning under sparse-reward regimes and are supported by the theoretical analysis and experiments.
    \item[] Guidelines:
    \begin{itemize}
        \item The answer \answerNA{} means that the abstract and introduction do not include the claims made in the paper.
        \item The abstract and/or introduction should clearly state the claims made, including the contributions made in the paper and important assumptions and limitations. A \answerNo{} or \answerNA{} answer to this question will not be perceived well by the reviewers. 
        \item The claims made should match theoretical and experimental results, and reflect how much the results can be expected to generalize to other settings. 
        \item It is fine to include aspirational goals as motivation as long as it is clear that these goals are not attained by the paper. 
    \end{itemize}

\item {\bf Limitations}
    \item[] Question: Does the paper discuss the limitations of the work performed by the authors?
    \item[] Answer: \answerYes{} 
    \item[] Justification: The paper includes a dedicated Limitations paragraph discussing dependence on training-time structural priors, the distinction between exact symbolic admissibility and approximate learned proxies in less structured domains, and additional training-time overhead in Appendix~\ref{sec:limit}.
    \item[] Guidelines:
    \begin{itemize}
        \item The answer \answerNA{} means that the paper has no limitation while the answer \answerNo{} means that the paper has limitations, but those are not discussed in the paper. 
        \item The authors are encouraged to create a separate ``Limitations'' section in their paper.
        \item The paper should point out any strong assumptions and how robust the results are to violations of these assumptions (e.g., independence assumptions, noiseless settings, model well-specification, asymptotic approximations only holding locally). The authors should reflect on how these assumptions might be violated in practice and what the implications would be.
        \item The authors should reflect on the scope of the claims made, e.g., if the approach was only tested on a few datasets or with a few runs. In general, empirical results often depend on implicit assumptions, which should be articulated.
        \item The authors should reflect on the factors that influence the performance of the approach. For example, a facial recognition algorithm may perform poorly when image resolution is low or images are taken in low lighting. Or a speech-to-text system might not be used reliably to provide closed captions for online lectures because it fails to handle technical jargon.
        \item The authors should discuss the computational efficiency of the proposed algorithms and how they scale with dataset size.
        \item If applicable, the authors should discuss possible limitations of their approach to address problems of privacy and fairness.
        \item While the authors might fear that complete honesty about limitations might be used by reviewers as grounds for rejection, a worse outcome might be that reviewers discover limitations that aren't acknowledged in the paper. The authors should use their best judgment and recognize that individual actions in favor of transparency play an important role in developing norms that preserve the integrity of the community. Reviewers will be specifically instructed to not penalize honesty concerning limitations.
    \end{itemize}

\item {\bf Theory assumptions and proofs}
    \item[] Question: For each theoretical result, does the paper provide the full set of assumptions and a complete (and correct) proof?
    \item[] Answer: \answerYes{} 
    \item[] Justification: The paper states the assumptions for the theoretical results in the relevant propositions and theorem, and provides complete proofs in the appendix. The theoretical claims are explicitly scoped to symbolic settings and conditionally to settings where learned potentials separate productive and unproductive prefixes in Appendix~\ref{app:sca}-Appendix~\ref{app:sage_soft}.
    \item[] Guidelines:
    \begin{itemize}
        \item The answer \answerNA{} means that the paper does not include theoretical results. 
        \item All the theorems, formulas, and proofs in the paper should be numbered and cross-referenced.
        \item All assumptions should be clearly stated or referenced in the statement of any theorems.
        \item The proofs can either appear in the main paper or the supplemental material, but if they appear in the supplemental material, the authors are encouraged to provide a short proof sketch to provide intuition. 
        \item Inversely, any informal proof provided in the core of the paper should be complemented by formal proofs provided in appendix or supplemental material.
        \item Theorems and Lemmas that the proof relies upon should be properly referenced. 
    \end{itemize}

    \item {\bf Experimental result reproducibility}
    \item[] Question: Does the paper fully disclose all the information needed to reproduce the main experimental results of the paper to the extent that it affects the main claims and/or conclusions of the paper (regardless of whether the code and data are provided or not)?
    \item[] Answer: \answerYes{} 
    \item[] Justification: The paper describes the evaluated models, benchmarks, baselines, metrics, rollout settings, ablations, and implementation details, and provides an anonymized code repository for reproducing the main results in Appendix~\ref{app:exp}.
    \item[] Guidelines:
    \begin{itemize}
        \item The answer \answerNA{} means that the paper does not include experiments.
        \item If the paper includes experiments, a \answerNo{} answer to this question will not be perceived well by the reviewers: Making the paper reproducible is important, regardless of whether the code and data are provided or not.
        \item If the contribution is a dataset and\slash or model, the authors should describe the steps taken to make their results reproducible or verifiable. 
        \item Depending on the contribution, reproducibility can be accomplished in various ways. For example, if the contribution is a novel architecture, describing the architecture fully might suffice, or if the contribution is a specific model and empirical evaluation, it may be necessary to either make it possible for others to replicate the model with the same dataset, or provide access to the model. In general. releasing code and data is often one good way to accomplish this, but reproducibility can also be provided via detailed instructions for how to replicate the results, access to a hosted model (e.g., in the case of a large language model), releasing of a model checkpoint, or other means that are appropriate to the research performed.
        \item While NeurIPS does not require releasing code, the conference does require all submissions to provide some reasonable avenue for reproducibility, which may depend on the nature of the contribution. For example
        \begin{enumerate}
            \item If the contribution is primarily a new algorithm, the paper should make it clear how to reproduce that algorithm.
            \item If the contribution is primarily a new model architecture, the paper should describe the architecture clearly and fully.
            \item If the contribution is a new model (e.g., a large language model), then there should either be a way to access this model for reproducing the results or a way to reproduce the model (e.g., with an open-source dataset or instructions for how to construct the dataset).
            \item We recognize that reproducibility may be tricky in some cases, in which case authors are welcome to describe the particular way they provide for reproducibility. In the case of closed-source models, it may be that access to the model is limited in some way (e.g., to registered users), but it should be possible for other researchers to have some path to reproducing or verifying the results.
        \end{enumerate}
    \end{itemize}

\item {\bf Open access to data and code}
    \item[] Question: Does the paper provide open access to the data and code, with sufficient instructions to faithfully reproduce the main experimental results, as described in supplemental material?
    \item[] Answer: \answerYes{} 
    \item[] Justification: The paper provides an anonymized code repository. The experiments use public benchmarks and a reproducible construction protocol for the Andrews--Curtis presentations; scripts and instructions are provided with the released code.
    \item[] Guidelines:
    \begin{itemize}
        \item The answer \answerNA{} means that paper does not include experiments requiring code.
        \item Please see the NeurIPS code and data submission guidelines (\url{https://neurips.cc/public/guides/CodeSubmissionPolicy}) for more details.
        \item While we encourage the release of code and data, we understand that this might not be possible, so \answerNo{} is an acceptable answer. Papers cannot be rejected simply for not including code, unless this is central to the contribution (e.g., for a new open-source benchmark).
        \item The instructions should contain the exact command and environment needed to run to reproduce the results. See the NeurIPS code and data submission guidelines (\url{https://neurips.cc/public/guides/CodeSubmissionPolicy}) for more details.
        \item The authors should provide instructions on data access and preparation, including how to access the raw data, preprocessed data, intermediate data, and generated data, etc.
        \item The authors should provide scripts to reproduce all experimental results for the new proposed method and baselines. If only a subset of experiments are reproducible, they should state which ones are omitted from the script and why.
        \item At submission time, to preserve anonymity, the authors should release anonymized versions (if applicable).
        \item Providing as much information as possible in supplemental material (appended to the paper) is recommended, but including URLs to data and code is permitted.
    \end{itemize}

\item {\bf Experimental setting/details}
    \item[] Question: Does the paper specify all the training and test details (e.g., data splits, hyperparameters, how they were chosen, type of optimizer) necessary to understand the results?
    \item[] Answer: \answerYes{} 
    \item[] Justification: The main text describes the evaluated model families, datasets, baselines, and metrics, while the appendix~\ref{app:exp} provides training-time details, filtering procedures, hyperparameter settings, and ablation protocols.
    \item[] Guidelines:
    \begin{itemize}
        \item The answer \answerNA{} means that the paper does not include experiments.
        \item The experimental setting should be presented in the core of the paper to a level of detail that is necessary to appreciate the results and make sense of them.
        \item The full details can be provided either with the code, in appendix, or as supplemental material.
    \end{itemize}

\item {\bf Experiment statistical significance}
    \item[] Question: Does the paper report error bars suitably and correctly defined or other appropriate information about the statistical significance of the experiments?
    \item[] Answer: \answerYes{} 
    \item[] Justification: The paper reports five-seed mean-standard deviation results in the appendix for the main benchmark tables in Appendix~\ref{app:addition}. The reported variability corresponds to independent training/evaluation runs under the same experimental conditions.
    \item[] Guidelines:
    \begin{itemize}
        \item The answer \answerNA{} means that the paper does not include experiments.
        \item The authors should answer \answerYes{} if the results are accompanied by error bars, confidence intervals, or statistical significance tests, at least for the experiments that support the main claims of the paper.
        \item The factors of variability that the error bars are capturing should be clearly stated (for example, train/test split, initialization, random drawing of some parameter, or overall run with given experimental conditions).
        \item The method for calculating the error bars should be explained (closed form formula, call to a library function, bootstrap, etc.)
        \item The assumptions made should be given (e.g., Normally distributed errors).
        \item It should be clear whether the error bar is the standard deviation or the standard error of the mean.
        \item It is OK to report 1-sigma error bars, but one should state it. The authors should preferably report a 2-sigma error bar than state that they have a 96\% CI, if the hypothesis of Normality of errors is not verified.
        \item For asymmetric distributions, the authors should be careful not to show in tables or figures symmetric error bars that would yield results that are out of range (e.g., negative error rates).
        \item If error bars are reported in tables or plots, the authors should explain in the text how they were calculated and reference the corresponding figures or tables in the text.
    \end{itemize}

\item {\bf Experiments compute resources}
    \item[] Question: For each experiment, does the paper provide sufficient information on the computer resources (type of compute workers, memory, time of execution) needed to reproduce the experiments?
    \item[] Answer: \answerYes{} 
    \item[] Justification: The paper reports the compute resources required for the experiments in Appendix~\ref{sec:compute}.
    \item[] Guidelines:
    \begin{itemize}
        \item The answer \answerNA{} means that the paper does not include experiments.
        \item The paper should indicate the type of compute workers CPU or GPU, internal cluster, or cloud provider, including relevant memory and storage.
        \item The paper should provide the amount of compute required for each of the individual experimental runs as well as estimate the total compute. 
        \item The paper should disclose whether the full research project required more compute than the experiments reported in the paper (e.g., preliminary or failed experiments that didn't make it into the paper). 
    \end{itemize}
    
\item {\bf Code of ethics}
    \item[] Question: Does the research conducted in the paper conform, in every respect, with the NeurIPS Code of Ethics \url{https://neurips.cc/public/EthicsGuidelines}?
    \item[] Answer: \answerYes{} 
    \item[] Justification: The research conforms to the NeurIPS Code of Ethics. The work uses public benchmarks and synthetic/symbolic reasoning tasks, does not involve human-subject data collection, and preserves anonymity in the released materials.
    \item[] Guidelines:
    \begin{itemize}
        \item The answer \answerNA{} means that the authors have not reviewed the NeurIPS Code of Ethics.
        \item If the authors answer \answerNo, they should explain the special circumstances that require a deviation from the Code of Ethics.
        \item The authors should make sure to preserve anonymity (e.g., if there is a special consideration due to laws or regulations in their jurisdiction).
    \end{itemize}

\item {\bf Broader impacts}
    \item[] Question: Does the paper discuss both potential positive societal impacts and negative societal impacts of the work performed?
    \item[] Answer: \answerYes{} 
    \item[] Justification: The paper discusses potential positive impacts, including improving reliable long-horizon reasoning and reducing dependence on dense process supervision, as well as potential risks from stronger reasoning models, such as misuse in automated generation or decision-support contexts in Appendix~\ref{sec:broader_impact}.
    \item[] Guidelines:
    \begin{itemize}
        \item The answer \answerNA{} means that there is no societal impact of the work performed.
        \item If the authors answer \answerNA{} or \answerNo, they should explain why their work has no societal impact or why the paper does not address societal impact.
        \item Examples of negative societal impacts include potential malicious or unintended uses (e.g., disinformation, generating fake profiles, surveillance), fairness considerations (e.g., deployment of technologies that could make decisions that unfairly impact specific groups), privacy considerations, and security considerations.
        \item The conference expects that many papers will be foundational research and not tied to particular applications, let alone deployments. However, if there is a direct path to any negative applications, the authors should point it out. For example, it is legitimate to point out that an improvement in the quality of generative models could be used to generate Deepfakes for disinformation. On the other hand, it is not needed to point out that a generic algorithm for optimizing neural networks could enable people to train models that generate Deepfakes faster.
        \item The authors should consider possible harms that could arise when the technology is being used as intended and functioning correctly, harms that could arise when the technology is being used as intended but gives incorrect results, and harms following from (intentional or unintentional) misuse of the technology.
        \item If there are negative societal impacts, the authors could also discuss possible mitigation strategies (e.g., gated release of models, providing defenses in addition to attacks, mechanisms for monitoring misuse, mechanisms to monitor how a system learns from feedback over time, improving the efficiency and accessibility of ML).
    \end{itemize}
    
\item {\bf Safeguards}
    \item[] Question: Does the paper describe safeguards that have been put in place for responsible release of data or models that have a high risk for misuse (e.g., pre-trained language models, image generators, or scraped datasets)?
    \item[] Answer: \answerYes{} 
    \item[] Justification: The paper provides a safeguards section in Appendix~\ref{sec:safeguards}.
    \item[] Guidelines:
    \begin{itemize}
        \item The answer \answerNA{} means that the paper poses no such risks.
        \item Released models that have a high risk for misuse or dual-use should be released with necessary safeguards to allow for controlled use of the model, for example by requiring that users adhere to usage guidelines or restrictions to access the model or implementing safety filters. 
        \item Datasets that have been scraped from the Internet could pose safety risks. The authors should describe how they avoided releasing unsafe images.
        \item We recognize that providing effective safeguards is challenging, and many papers do not require this, but we encourage authors to take this into account and make a best faith effort.
    \end{itemize}

\item {\bf Licenses for existing assets}
    \item[] Question: Are the creators or original owners of assets (e.g., code, data, models), used in the paper, properly credited and are the license and terms of use explicitly mentioned and properly respected?
    \item[] Answer: \answerYes{}
    \item[] Justification: The paper cites the original sources for the datasets, model backbones, and baseline methods used in the experiments. The released code and documentation include the licenses or terms of use for existing assets where available.
    \item[] Guidelines:
    \begin{itemize}
        \item The answer \answerNA{} means that the paper does not use existing assets.
        \item The authors should cite the original paper that produced the code package or dataset.
        \item The authors should state which version of the asset is used and, if possible, include a URL.
        \item The name of the license (e.g., CC-BY 4.0) should be included for each asset.
        \item For scraped data from a particular source (e.g., website), the copyright and terms of service of that source should be provided.
        \item If assets are released, the license, copyright information, and terms of use in the package should be provided. For popular datasets, \url{paperswithcode.com/datasets} has curated licenses for some datasets. Their licensing guide can help determine the license of a dataset.
        \item For existing datasets that are re-packaged, both the original license and the license of the derived asset (if it has changed) should be provided.
        \item If this information is not available online, the authors are encouraged to reach out to the asset's creators.
    \end{itemize}

\item {\bf New assets}
    \item[] Question: Are new assets introduced in the paper well documented and is the documentation provided alongside the assets?
    \item[] Answer: \answerYes{} 
    \item[] Justification: The paper introduces and releases code and experimental resources for SAGE, including scripts for constructing and evaluating the Andrews--Curtis reasoning instances. These assets are documented in the anonymized repository with instructions for reproducing the reported experiments.
    \item[] Guidelines:
    \begin{itemize}
        \item The answer \answerNA{} means that the paper does not release new assets.
        \item Researchers should communicate the details of the dataset\slash code\slash model as part of their submissions via structured templates. This includes details about training, license, limitations, etc. 
        \item The paper should discuss whether and how consent was obtained from people whose asset is used.
        \item At submission time, remember to anonymize your assets (if applicable). You can either create an anonymized URL or include an anonymized zip file.
    \end{itemize}

\item {\bf Crowdsourcing and research with human subjects}
    \item[] Question: For crowdsourcing experiments and research with human subjects, does the paper include the full text of instructions given to participants and screenshots, if applicable, as well as details about compensation (if any)? 
    \item[] Answer: \answerNA{} 
    \item[] Justification: The paper does not involve crowdsourcing, human-subject experiments, or participant compensation.
    \item[] Guidelines:
    \begin{itemize}
        \item The answer \answerNA{} means that the paper does not involve crowdsourcing nor research with human subjects.
        \item Including this information in the supplemental material is fine, but if the main contribution of the paper involves human subjects, then as much detail as possible should be included in the main paper. 
        \item According to the NeurIPS Code of Ethics, workers involved in data collection, curation, or other labor should be paid at least the minimum wage in the country of the data collector. 
    \end{itemize}

\item {\bf Institutional review board (IRB) approvals or equivalent for research with human subjects}
    \item[] Question: Does the paper describe potential risks incurred by study participants, whether such risks were disclosed to the subjects, and whether Institutional Review Board (IRB) approvals (or an equivalent approval/review based on the requirements of your country or institution) were obtained?
    \item[] Answer: \answerNA{} 
    \item[] Justification: The paper does not involve human-subject research, so IRB approval or equivalent review is not applicable.
    \item[] Guidelines:
    \begin{itemize}
        \item The answer \answerNA{} means that the paper does not involve crowdsourcing nor research with human subjects.
        \item Depending on the country in which research is conducted, IRB approval (or equivalent) may be required for any human subjects research. If you obtained IRB approval, you should clearly state this in the paper. 
        \item We recognize that the procedures for this may vary significantly between institutions and locations, and we expect authors to adhere to the NeurIPS Code of Ethics and the guidelines for their institution. 
        \item For initial submissions, do not include any information that would break anonymity (if applicable), such as the institution conducting the review.
    \end{itemize}

\item {\bf Declaration of LLM usage}
    \item[] Question: Does the paper describe the usage of LLMs if it is an important, original, or non-standard component of the core methods in this research? Note that if the LLM is used only for writing, editing, or formatting purposes and does \emph{not} impact the core methodology, scientific rigor, or originality of the research, declaration is not required.
    \item[] Answer: \answerYes{} 
    \item[] Justification: The paper discloses LLM usage in Appnedix~\ref{sec:llm}. LLMs were used only for wording refinement, including grammar, clarity, conciseness, and presentation. 
    \item[] Guidelines:
    \begin{itemize}
        \item The answer \answerNA{} means that the core method development in this research does not involve LLMs as any important, original, or non-standard components.
        \item Please refer to our LLM policy in the NeurIPS handbook for what should or should not be described.
    \end{itemize}

\end{enumerate}

\end{document}